\numberwithin{equation}{section}
\newtheorem{alg}{Algorithm}
\DeclareMathOperator*{\Ex}{\mathbb{E}}
\definecolor{darkred}{RGB}{150,0,0}
\definecolor{darkgreen}{RGB}{0,150,0}
\definecolor{darkblue}{RGB}{0,0,200}
\newtheorem{theorem}{Theorem}
\newtheorem{lemma}{Lemma}
\newtheorem{proposition}{Proposition}
\newtheorem{definition}{Definition}
\newtheorem{assumption}{Assumption}
\newtheorem{remark}{Remark}
\newtheorem{example}{Example}
\DeclareMathOperator*{\argmin}{arg\,min}
\newcommand{\abs}[1]{\left|#1\right|}
\newcommand{\R}{\mathbb{R}}
\newcommand{\E}{\operatorname{\mathbb{E}}}
\newcommand{\paren}[1]{\left(#1\right)}
\newcommand{\braces}[1]{\left\{#1\right\}}
\newcommand{\prf}[2]{\mathbb{P}_{#1}\left(#2\right)}
\newcommand{\indicator}[1]{\mathbb{1}\left\{ #1 \right\}}
\newcommand{\X}{\mathcal{X}}
\newcommand{\Exc}{\operatorname{\mathcal{E}_{\alpha}}}
\newcommand{\Hyp}{\mathcal{H}}
\title{Distribution-Free Rates in Neyman-Pearson Classification

}
\author{
  Mohammadreza M. Kalan \\
  Statistics, Columbia University \\
  \texttt{mm6244@columbia.edu} \\
   \And
  Samory Kpotufe \\
  Statistics, Columbia University \\
  \texttt{samory@columbia.edu} \\
}
\begin{document}
\maketitle

\begin{abstract}
We consider the problem of Neyman-Pearson classification which models unbalanced classification settings where error w.r.t. a distribution $\mu_1$ is to be minimized subject to low error w.r.t. a different distribution $\mu_0$. Given a fixed VC class $\Hyp$ of classifiers to be minimized over, we provide a full characterization of possible distribution-free rates, i.e., minimax rates over the space of \emph{all} pairs $(\mu_0, \mu_1)$. 
The rates involve a dichotomy between \emph{hard} and \emph{easy} classes $\Hyp$ as characterized by a simple geometric condition, a \emph{three-points-separation} condition, loosely related to VC dimension. 
\end{abstract}


\section{Introduction}
\emph{Neyman-Pearson classification} consists of minimizing miclassification error w.r.t. one class of data, subject to low error---i.e., below a pre-specified threshold $\alpha\in [0, 1]$---w.r.t. another class of data. The setting captures practical problems, e.g. in medical data analysis \citep{bourzac2014diagnosis,zheng2011machine}, or malware detection \citep{jose2018survey,kumar2019edima}, where a detection rule for a disease or malware is to be learned from data while maintaining low false detection rate below some threshold $\alpha$. We are interested in the statistical limits of this problem in a \emph{distribution-free} setting, as formalized below. 

Formally, consider two distributions $\mu_0, \mu_1$ on a measurable space $\paren{\X, \Sigma}$, representing classes $0$ and $1$, and a hypothesis class $\Hyp$ of decision rules $h: \X \mapsto \braces{0, 1}$, where $h(x) = 0$ or $1$ designates, respectively, whether $x$ is generated by $\mu_0$ or $\mu_1$. Neyman-Pearson classification consists of learning from data, a rule $h\in \Hyp$ that minimizes the $\mu_1$-risk $R_{\mu_1} (h) \doteq \prf{\mu_1}{h=0}$ subject to small $\mu_0$-risk $R_{\mu_0} (h) \doteq \prf{\mu_0}{h =1}$ at most some value $\alpha \in [0, 1]$. In particular, unlike in hypothesis testing, the distribution $\mu_1$ is assumed unknown, and only accessible from a finite sample; the distribution $\mu_0$ on the other hand often represents an abundant class---e.g., the population at large, lacking the disease or malware to be detected---and may be assumed known, or approximately known via a large separate sample. 
Thus, given an i.i.d. sample of size $n$ from $\mu_1$ (and perhaps a separate large sample from $\mu_0$ when it is also unknown), the learner is to return a rule $\hat h \in \Hyp$ s.t. $R_{\mu_0} (\hat h) \leq \alpha$, and whose performance is to then be assessed via its $\mu_1$-excess-risk 
$$\Exc(\hat h) \doteq R_{\mu_1} (\hat h) - \inf_{h \in \Hyp: R_{\mu_0}(h) \leq \alpha} R_{\mu_1}(h).$$ 

Our aim is to characterize all possible regimes of rates 
$\Exc(\hat h)$, as a function of $n$, in a \emph{distribution-free} setting, i.e., in a minimax sense over all pairs of distributions $\paren{\mu_0, \mu_1}$ for a fixed class $\Hyp$. This distribution-free setting has been popularized in the machine learning literature, often under the umbrella term of \emph{PAC-learning}, and similar to works on \emph{model-mispecification} in Statistics, reflects the ideal of imposing no further distributional assumption beyond the inherent desire that $\Hyp$ contains a good decision rule for the problem.


We adopt a common assumption that $\Hyp$ has finite \emph{Vapnik-Chervonenkis} (VC) dimension, which in particular allows for estimating the infimum of $R_{\mu_1}$ or $R_{\mu_0}$ from data, with no assumption on $\mu_1$ or  $\mu_0$, since it implies uniform concentration of empirical risks. Thus, our object of study is $\inf_{\hat h}\sup_{(\mu_0, \mu_1)} \E \Exc(\hat h)$ over any learner $\hat h$ mapping samples to 
a fixed VC class $\Hyp$, or a subset of $\Hyp$ depending on $\alpha$. We adopt a general setting with no further assumption on the measurable space $(\X, \Sigma)$. Our results are as follows, ignoring $\log n$ terms: 
\begin{itemize}[left=0pt] 
\item Assuming $\mu_0$ is known, we derive minimax rates for this problem for any VC class $\Hyp$. The rates turn out to depend on a dichotomy between easy and harder classes $\Hyp$, as determined by a simple geometric condition which we term a \emph{three-points-separation condition}. When $\Hyp$ satisfies this condition, minimax rates are of the familiar form $\tilde \Theta(n^{-1/2})$; to the best of our knowledge, our work provides the first lower-bound of this order for a VC class $\Hyp$, in the context of Neyman-Pearson classification. In fact, three-point-separation first arises in the construction of such an $\Omega(n^{-1/2})$ lower-bound, where it becomes apparent that any valid construction must satisfy the condition. 

When $\Hyp$ does not satisfy the condition, Neyman-Pearson classification is easy: restricting attention to typical choices $\alpha<1/2$, the subclass 
$\Hyp_\alpha(\mu_0) \doteq \braces{h \in \Hyp: R_{\mu_0}(h) \leq \alpha}$ is then highly structured irrespective of $\mu_0$, namely, it is either a singleton, or it admits a total order. As a consequence, we can show that minimax rates are strictly faster, i.e., always $ o(n^{-1/2})$. In particular, the problem is trivial if in addition, 
$\Hyp_\alpha(\mu_0)$ admits a \emph{maximal} element; whenever there is no maximal element, the minimax rate is $\tilde \Theta(n^{-1})$. 

We remark that such dichotomy in rates stands in contrast to traditional classification where, {whenever $\Hyp$ is not a singleton}, distribution free rates are never faster than 
$O(n^{-1/2})$ [see, e.g. Theorem 6.7 of \cite{shalev2014understanding}]; rather, \emph{fast} rates below $o(n^{-1/2})$ in traditional classification depend on the interaction between $\Hyp$ and the data distribution via noise conditions, rather than on the structure of $\Hyp$ \cite[see e.g.,][]{mammen1999smooth,bartlett2006empirical,koltchinskii2006local,massart2006risk}. 
\item When $\mu_0$ is unknown and only approximated from data, a similar dichotomy is present, and is still determined by three-points-separation. However delicate nuances arise due to the fact that the learner $\hat h$ may fall outside of the set $\Hyp_\alpha(\mu_0)$---since this set is now only approximately known---and therefore could have strictly lower $\mu_1$-risk than the $\mu_1$-risk minimizer over $\Hyp_\alpha(\mu_0)$ which it is evaluated against. Yet, as we show, the problem remains just as hard with similar lower-bounds, but now with some regimes of rates determined by how well certain structural aspects of subclasses of the form $\Hyp_{\alpha'}(\mu_0)$ are preserved by their empirical estimates $\Hyp_{\alpha'}(\hat \mu_0)$. These are discussed in detail in Section \ref{sec:Unknown}.
\end{itemize} 

Three-points-separation loosely relates to VC dimension, as any class of VC dimension at least 3 always satisfies the condition, while it is easily shown that classes of VC dimension 1 or 2 may or may not satisfy the condition. Some such classes not satisfying three-points-separation are induced by the classical Neyman-Pearson lemma on universally optimal decision rules for the problem (as explained in the background section below). 

Our upper and lower minimax bounds are tight up to $\log n$ terms. Our work leaves open how this might be further tightened, as it appears rather challenging. We note that even for vanilla classification with VC classes, such a question was only recently resolved after decades of work on the subject \citep{hanneke2016optimal}.



\subsection*{Further Background and Related Work}
As alluded to so far, Neyman-Pearson classification is related to hypothesis testing, which corresponds to the case where both $\mu_0$ and $\mu_1$ are known, and where $h(X)$ then denoting a test on the sample $X$, with the null-hypothesis that $X\sim \mu_0$. The optimal such test at level $\alpha$, i.e., minimizing $R_{\mu_1}(h)$ over $h$ s.t. $R_{\mu_0} (h) \leq \alpha$, is then given by the classical Neyman-Pearson lemma: under mild regularity, $\Exc(h)$, or equivalently, $R_{\mu_1}(h)$, is minimized, over all measurable $h$,  
by thresholding a density-ratio $d \mu_1/ d \mu_0$.
In other words, universally optimal decision rules for a given pair $(\mu_0, \mu_1)$ is given by the class of level sets $\Hyp^* \doteq \braces{\indicator{d \mu_1/ d \mu_0 \geq \lambda}: \lambda >0}$, under mild regularity conditions \cite[see][Theorem 3.2.1]{lehmann1986testing}. For ease of discussion, we will refer to $\Hyp^*$ as a \emph{Neyman-Pearson} class (for $(\mu_0, \mu_1)$).

Such relation to hypothesis testing gives rise naturally to nonparametric solutions to the problem. Namely, given samples from both $\mu_1$ and $\mu_0$ (unknown), various approaches have been proposed to estimate the density-ratio $d \mu_1/ d \mu_0$ to be used as a plug-in estimate of the universally optimal rule at each level $\alpha \in [0, 1]$ (see e.g., \citep{tong2013plug, zhao2016neyman, tong2018neyman,tian2021neyman}). Furthermore, when $\mu_0$ is unknown, one would also estimate an adequate level-set of the density-ratio, commensurate with the desired level $\alpha$. Such nonparameteric density-ratio and level-set estimation are difficult problems and can in fact be infeasible in practice, especially in high-dimensional settings with limited data. For instance, under smoothness conditions on the density ratio, rates of convergence, in the worst-case, are of the form $n^{-1/O(d)}$, i.e., are exponentially slow in dimension $d$. However, as shown in \citep{tong2013plug,zhao2016neyman} the problem can benefit from \emph{margin conditions}: these are conditions on $(\mu_0, \mu_1)$ that characterize easier problems where the density ratio $d \mu_1/ d \mu_0 $ concentrates away from the optimal threshold $\lambda = \lambda(\alpha)$. 
Under such conditions rates faster than $o(n^{-1/2})$ can be obtained even in the nonparametric setting. This attests to the fact that Neyman-Pearson classification is easy for some pairs of distributions $(\mu_0, \mu_1)$, which in hindsight is evident when one considers, e.g., the extreme case where $\mu_0$ and $\mu_1$ have non-overlapping support---notwithstanding the fact that the density ratio is ill-defined in this extreme case. In the present work however, rather than conditions on $(\mu_0, \mu_1)$ that influence regimes of rates, we elucidate new conditions pertinent to the hypothesis class $\Hyp$ being optimized over, that separates regimes of rates, irrespective of given distributions. 

The idea of fixing a hypothesis class to optimize over, in contrast to the nonparametric approaches discussed above, has it roots in many early research work that aimed at structural assumptions that could lead to more practical procedures \cite[see][]{casasent2003radial, cannon2002learning, scott2005neyman, scott2007performance, han2008analysis, rigollet2011neyman, tong2020neyman, ma2020quadratically}. Typically, such structural assumptions consist of either parametric models on $(\mu_0, \mu_1)$---thus restricting the class $\Hyp$ of rules to optimize over---or alternatively $\Hyp$ itself may be directly modeled, however, with somewhat conflicting desiderata: on one side $\Hyp$ ought to be rich enough to yield good rules for a given $(\mu_0, \mu_1)$, but also has low enough complexity (usually bounded VC dimension) to allow for empirical approximation of relevant errors. Such delicate tradeoff on the choice of $\Hyp$ is not the subject of this work, but certainly comes to mind as we characterize a new dichotomy in achievable rates.  


Existing upper-bounds on $\Exc$, starting with the seminal works of \citep{cannon2002learning, scott2005neyman, scott2007performance, blanchard2010semi}, are of the form $\tilde O(n^{-1/2})$. However, we know of no corresponding lower-bound in the literature, and show here that in fact such a rate corresponds to classes $\Hyp$ that satisfy three-points-separation. 

The bulk of theoretical works on Neyman-Pearson classification has focused on more practical aspects of the problem. For instance,  
\cite{han2008analysis, rigollet2011neyman, ma2020quadratically, lin2023gbm} consider minimizing convex surrogates of $R_{\mu_1}$ over low complexity classes $\Hyp$ (e.g., linear models, convex hull of a finite class, general VC classes, etc). \cite{tian2021neyman} analyze general optimization frameworks that exploit dual objectives to the Neyman-Pearson problem and establish consistency under both nonparametric and parametric models. \cite{scott2019generalized} establishes consistency in settings with imperfect but related data. More recently, \cite{fan2023neyman, zeng2022bayes} draw links between Neyman-Pearson classification and emerging \emph{fairness} constraints between populations in machine learning applications. As such, the basic question addressed in the present work, namely that of characterizing regimes of rates for the problem, has so far remained open.

\section{Setup and Definitions}

We consider a measurable space $(\X, \Sigma)$ and a hypothesis class $\Hyp$ of measurable functions 
$h: \X \mapsto \braces{0, 1}$. 

\begin{assumption} Without loss of generality, we also assume that for every $x\in \X$, the set $\braces{x} \in \Sigma$: equivalently, since every $h\in \Hyp$ is measurable and therefore constant on atoms of the $\sigma$-algebra $\Sigma$, we can simply take $\X$ as the set of atoms of $\Sigma$ in all discussions. 
\end{assumption}

{\bf Neyman-Pearson Classification.} As introduced earlier, we consider any two probability distributions $\mu_0, \mu_1$ on $(\X, \Sigma)$ and recall the $\mu_0$ and $\mu_1$ risks $R_{\mu_0}(h)\doteq\mathbb{P}_{\mu_0}(h=1)$ and $R_{\mu_1}(h)\doteq\mathbb{P}_{\mu_1}(h=0)$. Also, for $0< \alpha \leq 1$, we retain the notation $\Hyp_{\alpha}(\mu_0)\doteq \braces{h\in \Hyp: R_{\mu_0}(h) \leq \alpha}$. The Neyman-Pearson classification problem at level $0< \alpha \leq 1$ consists of minimizing $R_{\mu_1}(h)$ over $h \in \Hyp_{\alpha}(\mu_0)$, given a sample from $\mu_1$ and $\mu_0$ when $\mu_0$ is also unknown. 

When $\mu_0$ is unknown, it is typical to relax the constraint $\Hyp_\alpha(\mu_0)$ to $\Hyp_{\alpha + \epsilon_0}(\mu_0)$, for some slack $\epsilon_0$ depending on $\mu_0$ sample size. In such a case, note that it is possible that the infimum of $\mu_1$-risk over $\Hyp_{\alpha + \epsilon_0}(\mu_0)$ is strictly smaller than the infimum of the $\mu_1$-risk over its subset $\Hyp_{\alpha}(\mu_0)$; we therefore define the excess risk as follows to cover both the case where the learner $\hat h$ maps to 
$\Hyp_\alpha(\mu_0)$ where $\mu_0$ is known, or to $\Hyp_{\alpha + \epsilon_0}(\mu_0)$. In all that follows, we abuse notation and let $\hat h$ denote both the learner, and the function it maps to in $\Hyp$.

\begin{definition}\label{excess}
    Let $ 0< \alpha \leq 1$. We define the {\bf $\mu_1$-excess-risk}, at level $\alpha$,  of an $\hat{h}\in\mathcal{H}$ as follows:
    \begin{align*}
    \mathcal{E}_{\alpha}(\hat{h})=\max\bigg\{0,R_{\mu_1}(\hat{h})-\inf_{h \in \Hyp_{\alpha}(\mu_0)} R_{\mu_1}(h)\bigg\}.
    \end{align*}
\end{definition}

{\bf Empirical Risks.} We will refer to the empirical version of the  $\mu_1$-risk, computed on an i.i.d. sample $S_{\mu_1}\sim \mu_1^n$ as $\hat R_{\mu_1}(h) \doteq \frac{1}{n}\sum_{X\in S_{\mu_1}} \indicator{h(X) \neq 1}$. Similarly, define $\hat R_{\mu_0}\doteq \frac{1}{n_0}\sum_{X\in S_{\mu_0}} \indicator{h(X) \neq 0}$ over $S_{\mu_0}\sim \mu_0^{n_0}$. 

As stated earlier, we adopt the assumption that the hypothesis class $\Hyp$ has finite VC dimension as defined below.

\begin{definition}[\citep{vapnik2015uniform}]
A hypothesis class $\mathcal{H}$ {\bf shatters} a finite set of points $x_1,...,x_m\in \mathcal{X}$ if any possible $0$-$1$ labeling of these points is realized by some $h \in \mathcal{H}$. The {\bf VC dimension} of $\mathcal{H}$, denoted $d_{\mathcal{H}}$, is then defined as the largest number of points that can be shattered by $\mathcal{H}$. 
\end{definition}

As alluded to so far, $\Hyp$ is referred to as a VC class whenever $d_\Hyp < \infty$. 
As is well known, VC classes $\Hyp$ satisfies $\sup_{\mu_1} \sup_{h \in \Hyp} \abs{\hat{R}_{\mu_1}(h) - R_{\mu_1}(h)} = o_P (1)$, i.e., they admit \emph{distribution-free} uniform convergence of empirical risks, hence their obvious relevance to the setting.

{\bf Asymptotic notation.} We will use the shorthand notations $\asymp$ and $\lesssim$ to indicate respectively, the order of a minimax rate, and inequality, up to  constants and $\log{n}$ terms.



\section{Main Results}
We present our main results in this section, for both the settings of known and unknown $\mu_0$ . 
We start by defining a few concepts which turn out critical in characterizing the minimax rates for the problem. 

\begin{definition}
We say that a subset $\tilde{\mathcal{H}}$ of $\cal H$ \emph{\bf admits a maximal element}, if $\exists h \in \tilde{\mathcal{H}}$ such that $\forall h'\in \tilde{\mathcal{H}}$ we have $\braces{h = 1} \supset \braces{h' = 1}$. Such an $h$ will be referred to as \emph{the maximal element} of $\tilde \Hyp$. 
\end{definition}

Next, we turn to the most important definition of this section. 
\begin{definition}\label{def:three_points}
We say that $\cal H$ {\bf separates three points}, alternatively, satisfies {\bf three-points-separation}, if there exist two hypotheses $h_1, h_2 \in \cal H$, and three points $x_0, x_1, x_2$ in $\X$ such that the following two conditions holds: 
\begin{itemize} 
\item[\rm (a)] $h_1 (x_0) = h_2(x_0) = 0$,
\item[\rm (b)] $h_1(x_1) = h_2(x_2) = 1$ and $h_1(x_2) = h_2(x_1) = 0$. 
\end{itemize}

Alternatively, we say that two sets $\X_1, \X_2 \subset \X$ separate three points, if we have that (a) $\X_1 \cup \X_2 \neq \X$ and (b) $\X_1 \not\subset \X_2$ and $\X_2 \not\subset \X_1$; thus, $\Hyp$ separates three points if $\exists h_1, h_2\in \Hyp$ such that $\braces{h_1 =1}$ and $\braces{h_2 =1}$ separate three points. 
\end{definition}

The condition is illustrated in Figure \ref{fig:sub1_sep_def}. It should be immediately clear that any $\Hyp$ of VC dimension $d_\Hyp \geq 3$ satisfies three-points-separation since by definition it \emph{shatters} at least three points. However, VC dimension is only loosely related as $d_\Hyp < 3$ alone does not negate the condition; for emphasis, we have the following simple proposition, substantiated by the remark thereafter. 

\begin{proposition}\label{exis}
For any value of VC dimension $d_\Hyp$ in $\braces{1, 2}$, there exist hypothesis classes $\Hyp$ that satisfy three-points-separation and some that do not. 
\end{proposition}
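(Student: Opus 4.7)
The plan is to prove the proposition by exhibiting four explicit examples of hypothesis classes, one for each combination of VC dimension in $\{1,2\}$ with or against three-points-separation; since the conclusion is purely existential, a checklist of concrete constructions suffices, and the only real work is confirming the VC dimension and checking the (non-)existence of a three-points-separating pair for each.

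For the two classes satisfying three-points-separation, I would take simple examples for which separating configurations are immediate. With $d_\Hyp = 1$ and separation: let $\X = \{x_0, x_1, x_2\}$ and $\Hyp = \{\mathbb{1}\{x_1\}, \mathbb{1}\{x_2\}\}$; the two hypotheses disagree on $\{x_1\}$ (so $d_\Hyp \geq 1$), any two-point subset fails to produce all four labelings (so $d_\Hyp \leq 1$), and $(x_0,x_1,x_2)$ together with $h_1 = \mathbb{1}\{x_1\},\ h_2 = \mathbb{1}\{x_2\}$ directly witness the definition. With $d_\Hyp = 2$ and separation: the class of all intervals $\mathbb{1}_{[a,b]}$ on $\R$ is classically known to have $d_\Hyp = 2$, and two disjoint intervals together with any point outside their union witness three-points-separation.

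For the two classes failing three-points-separation, I would use chain-like structures. With $d_\Hyp = 1$ and no separation: take the threshold class $\Hyp = \{\mathbb{1}_{(-\infty,t]} : t\in\R\}$; thresholds are a chain under $\subset$, so $\{h_1=1\}$ and $\{h_2=1\}$ are always comparable and three-points-separation fails, while $d_\Hyp=1$ is standard. With $d_\Hyp = 2$ and no separation: let $\X = \{0,1,2\}$ and $\Hyp = \{\emptyset, \{0,1\}, \{1,2\}, \{0,1,2\}\}$. The class shatters $\{0,2\}$ via labelings $(0,0),(1,0),(0,1),(1,1)$, giving $d_\Hyp \geq 2$; it cannot shatter $\X$ itself since only four of the eight labelings are realized, so $d_\Hyp = 2$. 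To rule out three-points-separation I would enumerate the $\binom{4}{2}$ pairs: any pair involving $\emptyset$ or $\{0,1,2\}$ is comparable, and the only remaining pair $\{0,1\},\{1,2\}$ has union equal to $\X$, so condition (a) of the geometric form of Definition \ref{def:three_points} fails in every case.

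The main obstacle is constructing the last example: the obvious $d_\Hyp=2$ candidates (intervals, axis-aligned halfspaces, etc.) already satisfy three-points-separation, so one has to cook up a class that attains VC dimension $2$ through a shatterable pair while keeping any two sets $\{h_1=1\},\{h_2=1\}$ either comparable or jointly covering $\X$; the three-element example above is essentially forced by this constraint. Everything else reduces to routine shattering counts and set comparisons.
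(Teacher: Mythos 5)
Your proof is correct and follows the same strategy as the paper's (the proposition is substantiated in Remark~\ref{remark2}): exhibit one explicit class for each of the four (VC dimension, separation) combinations. Your construction for the hardest case, $d_\Hyp = 2$ without three-points-separation, is a clean finite alternative to the paper's --- instead of adjoining $\indicator{x\neq x_1}$ to one-sided thresholds on a half-line $[x_0,\infty)$, you take the four subsets $\{\emptyset,\{0,1\},\{1,2\},\{0,1,2\}\}$ of a three-point domain, which shatters $\{0,2\}$ while every incomparable pair of indicator sets covers all of $\X$.
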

\begin{remark}[Examples for $d_\Hyp < 3$.]\label{remark2}
As a basic example, any \emph{nested} class $\Hyp$, e.g., one-sided thresholds $\braces{\indicator{x \geq \lambda}: \lambda \in \R}$ as in Figure \ref{fig:sub2_gaussian}, cannot satisfy three-points-separation since (b) in Definition \ref{def:three_points} cannot hold. As a related example of a nested class, arguably impractical, the Neyman-Pearson class $\braces{\indicator{d\mu_1/d\mu_0 \geq \lambda}: \lambda\in \R}$ on a general space $\X$ cannot separate three points, even if it may induce rich subsets $\braces{h =1}$ of $\X$. Furthermore, it is well known that any such class $\Hyp$ of hypotheses ordered by inclusion has VC dimension $d_\Hyp =1$. On the other hand, a simple class of VC dimension $1$ that separates three points is given by $\Hyp = \braces{h = \indicator{x}: x \in \X}$, i.e., where every $h$ is $1$ exactly on one point. 

 The class of two-sided thresholds $\braces{\indicator{x \geq \lambda}, \indicator{x \leq \lambda}: \lambda \in \R}$ is a simple of $\Hyp$ with $d_{\Hyp} =2$ which separates three points. The case of a class with $d_\Hyp =2$ which does not separate three points is a bit less trivial. A simple example can be constructed as follows. Let $\Hyp_0 = \braces{h_\lambda= \indicator{x \geq \lambda}: \lambda \in [x_0, \infty)]}$, i.e., one-sided thresholds on the half line $\X \doteq [x_0, \infty)$; then let $\Hyp = \Hyp_0 \cup \braces{h}$ where 
 $h \doteq \indicator{x \neq x_1}$ for some $x_1 > x_0$, i.e., $h$ is $0$ only at $x_1$. Then, as previously discussed, no two hypotheses in $\Hyp_0$ separates three points, by nestedness; now $h$ and $h_\lambda \in \Hyp_0$ cannot separate three-points either: if $\lambda > x_1$, then $\{h_\lambda = 1\} \subset \{h =1\}$, while $\lambda \leq x_1$ implies $\{h_\lambda = 1\} \cup \{h =1\} = \X$. This also implies that $d_\Hyp < 3$, while $x_0, x_1$ are clearly shattered.

\end{remark}


\begin{figure}
\centering
\begin{subfigure}{.46\textwidth}
  \centering
  \includegraphics[width=.6\textwidth]{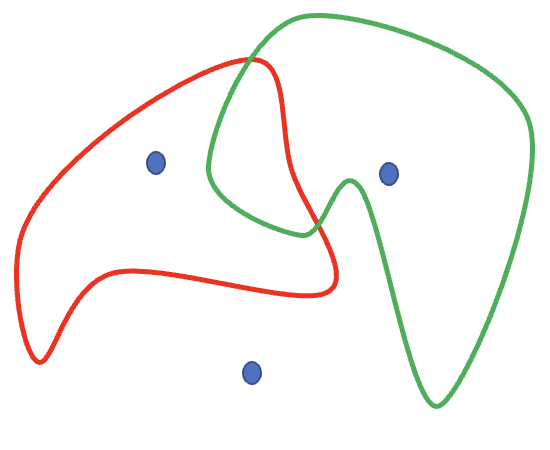}
  \caption{Two sets separating three points.}
  \label{fig:sub1_sep_def}
\end{subfigure}
\hfill
\hspace{-60mm}
\begin{subfigure}{.46\textwidth}
  \centering
\includegraphics[width=1\textwidth]{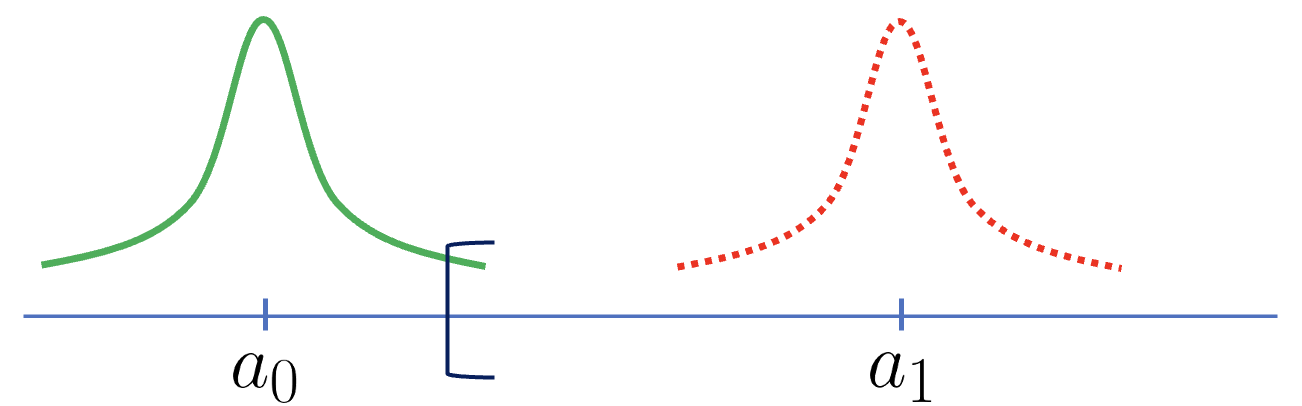}
  \caption{The class $\mathcal{H}=\{\mathbb{1}\{x\geq \lambda\}: \lambda\in \mathbb{R}\}$ of one-sided thresholds does not separate three points. 
  Such $\Hyp$ is induced, e.g., by Gaussian modeling assumptions of the form $\{(\mu_0,\mu_1)=(\mathcal{N}(a_0,\sigma^2),\mathcal{N}(a_1,\sigma^2)): a_1>a_0\}$.} 
  \label{fig:sub2_gaussian}
\end{subfigure}
\caption{}
\label{fig:test}
\end{figure}


\subsection{Known $\mu_0$, Exact Level $\alpha$}
We first consider a setting where, while $\mu_1$ is unknown, the learner has knowledge of the distribution $\mu_0$. This models, in the extreme, settings where much knowledge of $\mu_0$ is available; even more importantly, the setting is of intellectual interest as it isolates the hardness of the problem as due solely to the lack of knowledge of $\mu_1$. Also, importantly, the insights of this section serve as a building block to the more difficult case of unknown $\mu_0$ considered in Section \ref{sec:Unknown}. 

\begin{definition}\label{Distribution-free-learner} 
We call $\hat h$ an {\bf exact $\alpha$-learner} if it maps any i.i.d. sample $S_{\mu_1}\sim \mu_1^n$, $\mu_1$ unknown, to a function in $\mathcal{H}_{\alpha}(\mu_0).$
\end{definition}







We note that \emph{knowledge of $\mu_0$} is formalized in the theorem below by considering sub-families $\mathcal{U}(\mu_0)$ with fixed $\mu_0$, but varying $\mu_1$.  

\begin{theorem}\label{thm1}
Let $\cal H$ be a hypothesis class with finite VC dimension $d_{\cal H}$. Fix any $0< \alpha < \frac{1}{2}$. Let $\cal U$ denote all pairs of probability distributions $(\mu_0, \mu_1)$ over 
$(\mathcal X, \Sigma)$ such that the set $\mathcal{H}_{\alpha}(\mu_0)$ is non-empty. In all that follows, we let $\hat h$ denote any exact $\alpha$-learner as in Definition \ref{Distribution-free-learner}, operating on $S_{\mu_1}\sim \mu_1^n$. Assume $n$ is large enough so that $\frac{d_{\mathcal{H}}}{n}\leq \frac{1}{2}$. Furthermore, let ${\cal U}(\mu_0)$ denote the projection of $\cal U$ along a fixed $\mu_0$.

\begin{enumerate}
\item[\rm (i)] Suppose $\cal H$ separates three points. Then there exists $\mu_0$ such that
\begin{align*}
   \inf_{\hat{h}} \sup_{\mathcal{U}(\mu_0)} \Ex_{S_{\mu_1}}  \mathcal{E}_{\alpha}(\hat{h}) \asymp \sqrt{\frac{d_{\mathcal{H}}}{n}}.
\end{align*}
\item [\rm (ii)] Suppose $\cal H$ does not separate three points. The minimax rate is then of strictly lower order either $d_{\mathcal{H}}/n$ or $0$ (where a rate of $0$ indicates that the problem is trivial). 
 More specifically, we have the following.  

\begin{enumerate}

\item Consider any $\mu_0$ such that $\mathcal{H}_\alpha(\mu_0)$ does not admit a maximal element. We then have 

\begin{align*}
    \inf_{\hat h} \sup_{{\cal{U}}(\mu_0)} \Ex_{S_{\mu_1}} \ \mathcal{E}_{\alpha}(\hat h) \asymp {\frac{d_{\cal H}}{n}}.
\end{align*}
\item Consider any $\mu_0$ such that $\mathcal{H}_\alpha(\mu_0)$ admits a maximal element. We then have 
\begin{align*}
    \inf_{\hat h} \sup_{{\cal U}(\mu_0)} \Ex_{S_{\mu_1}} \ \mathcal{E}_{\alpha}(\hat h) = 0.
\end{align*}
\end{enumerate}

\end{enumerate}
\end{theorem}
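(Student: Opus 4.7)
The plan is to separate the argument into a structural lemma about $\mathcal{H}_{\alpha}(\mu_0)$ under the failure of three-points-separation, followed by matching upper and lower bounds for each of the three regimes.

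First, I would prove the key structural lemma: if $\mathcal{H}$ does not separate three points and $\alpha < 1/2$, then $\mathcal{H}_{\alpha}(\mu_0)$ is totally ordered by inclusion of $\{h=1\}$-sets. For any $h_1, h_2 \in \mathcal{H}_{\alpha}(\mu_0)$, a union bound gives $\mu_0(\{h_1=1\} \cup \{h_2=1\}) \leq 2\alpha < 1$, so the union is a strict subset of $\mathcal{X}$; by the equivalent set formulation in Definition \ref{def:three_points}, if neither $\{h_1=1\} \subset \{h_2=1\}$ nor the reverse held, $\mathcal{H}$ would separate three points, contradiction. Case (ii)(b) is then immediate: the constant learner returning the maximal element $h^\star$ satisfies $\{h^\star = 0\} \subset \{h' = 0\}$ for every $h' \in \mathcal{H}_{\alpha}(\mu_0)$, so $R_{\mu_1}(h^\star) \leq R_{\mu_1}(h')$ and $\mathcal{E}_\alpha(\hat h) = 0$.

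For the upper bound in (i), I would run ERM of $\hat R_{\mu_1}$ over $\mathcal{H}_{\alpha}(\mu_0)$ (which is known because $\mu_0$ is) and apply standard VC uniform convergence to conclude $\mathcal{E}_\alpha(\hat h) \lesssim \sqrt{d_{\mathcal{H}}/n}$. For the upper bound in (ii)(a), I exploit the total order: along a nested chain, the variance of the increment $\mathbb{1}[h_2 = 1] - \mathbb{1}[h_1 = 1]$ for $h_1 \preceq h_2$ is at most its mean, i.e., at most the risk gap $R_{\mu_1}(h_1) - R_{\mu_1}(h_2)$, so a uniform Bernstein bound on the chain yields a fast rate $\lesssim d_\mathcal{H}/n$ for an ERM that climbs toward the chain supremum. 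The matching lower bound in (ii)(a) is a Le Cam two-point construction on a nested class with no maximal element, where two candidate $\mu_1$'s differ by a Bernoulli amount of order $1/n$ concentrated on a boundary region that any strict interior hypothesis must miss.

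The main obstacle is the lower bound $\Omega(\sqrt{d_\mathcal{H}/n})$ in (i). The plan is Assouad's method on a construction satisfying both the fixed-$\mu_0$ and exact-$\alpha$ constraints simultaneously. Using the witness hypotheses $h_1, h_2$ and points $x_0, x_1, x_2$ from Definition \ref{def:three_points}, place $\mu_0$-mass $1-2\alpha$ on the anchor $x_0$ and $\alpha$ each on $x_1, x_2$, keeping both $h_1, h_2$ feasible; let $\mu_1$ put mass $1/2 \pm \epsilon$ on $x_1, x_2$, producing a risk gap of $2\epsilon$ between the two labelings with per-sample KL divergence $O(\epsilon^2)$. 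Taking $\epsilon \asymp 1/\sqrt{n}$ gives $\Omega(\sqrt{1/n})$. To lift this to $\Omega(\sqrt{d_\mathcal{H}/n})$, I would anchor a shattered set of $d_\mathcal{H}$ points to a common $x_0$ and run $d_\mathcal{H}$ independent copies of the Bernoulli gadget, using shattering to realize each sign pattern in the Assouad cube by some hypothesis in $\mathcal{H}$. The delicate step is ensuring that each such candidate stays within the $\alpha$-budget under a single $\mu_0$, while three-points-separation --- only a two-witness condition --- still powers a $d_\mathcal{H}$-dimensional hypercube of hard instances; the anchor point $x_0$ does the bookkeeping by absorbing any $\mu_0$-mass not consumed by the shattered coordinates, which is what makes the condition both necessary and sufficient for the $\sqrt{d_\mathcal{H}/n}$ regime.
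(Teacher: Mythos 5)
Your structural lemma, the trivial regime (ii.b), the $\sqrt{d_{\mathcal{H}}/n}$ upper bound in (i), and the $d_{\mathcal{H}}/n$ upper bound in (ii.a) all track the paper's proof. (The paper proves the fast upper bound by applying relative VC inequalities to the class of set differences $\{B_1\setminus B_2 : B_1,B_2\in\mathcal H\}$, whose VC dimension is $O(d_{\mathcal H})$, rather than via a Bernstein chaining argument along the chain; these are the same localization idea in different clothing.)

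For the lower bound in (i), your framing slightly miscasts the role of three-points-separation. The $d_{\mathcal H}$-dimensional hypercube is powered by \emph{shattering}, not by iterating the two-witness gadget: the paper splits into $d_{\mathcal H}\geq 17$, where one takes $d_{\mathcal H}$ shattered points $\{x_0, x_1,\dots,x_{d/2},x_1',\dots,x_{d/2}'\}$, anchors $\mu_0$ at $x_0$, and runs Tsybakov's method with a Gilbert--Varshamov packing (three-points-separation holds automatically since $d_{\mathcal H}\geq 3$); versus small $d_{\mathcal H}$, where the two-witness gadget alone gives $\sqrt{1/n}\asymp\sqrt{d_{\mathcal H}/n}$. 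Your construction is essentially the paper's, but the reason three-points-separation is necessary is only visible in the small-$d_{\mathcal H}$ case.

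The genuine gap is in your (ii.a) lower bound. A \emph{static} two-point Le Cam construction, where the boundary region carrying the extra $1/n$ mass is fixed in advance, fails: since the chain has no maximal element but is totally ordered, for any fixed alternative $\mu_1 = (1-\tfrac1n)\delta_{x_0}+\tfrac1n\delta_{x_1}$ with $x_1$ somewhere in the chain's union, there exists $h\in\mathcal H_{\alpha}(\mu_0)$ with $\{x_0,x_1\}\subset\{h=1\}$, so a learner that always outputs a sufficiently large element of the chain has zero excess risk on both alternatives. The hard instance must adapt to the learner. The paper's argument does exactly this: fix the learner $\hat h$, let $\hat h_0 := \hat h(\{x_0\}^n)$ be its output on the constant-$x_0$ sample, use the no-maximal-element property to find $h_1$ with $\{\hat h_0=1\}\cup\{h_0=1\}\subsetneq\{h_1=1\}$, pick $x_1$ in the gap, and define $\mu_1=(1-\tfrac1n)\delta_{x_0}+\tfrac1n\delta_{x_1}$. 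With probability $(1-1/n)^n\geq 1/(2e)$ the sample is all $x_0$'s, the learner returns $\hat h_0$, and it misses the $1/n$ mass at $x_1$. Without this adaptivity your lower bound does not go through.
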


For intuition on the dichotomy, three-points separation allows for the existence of sets of the form $\braces{h =1}, \braces{h'=1}$, $h, h' \in \Hyp_\alpha(\mu_0)$ whose mass under $\mu_1$ have to be compared to minimize $\mu_1$ error; the discrepancy between empirical $\hat \mu_1$ and population $\mu_1$ mass of sets is of order $n^{-1/2}$. On the other hand, when three-points separation fails, the problem roughly boils down to determining whether two sets $\braces{h =1}\supset \braces{h'=1}$, one containing the other, have the same mass under $\mu_1$; this only requires one data point in the set difference $\braces{h =1}\setminus \braces{h'=1}$, hence the $n^{-1}$ rate. More exact intuition is given in Section \ref{sec: Minimax Lower-Bounds}. 

Cases (ii.a) and (ii.b) are illustrated in the following examples. Generally, for typical problem spaces (a space being defined by $(\X, \Sigma, \Hyp)$) both (ii.a) and (ii.b) may hold (Example \ref{ex:maximal_or_no_maximal_element}). Interestingly,  there are also problems where only (ii.b) can hold, i.e., irrespective of the choice of $\mu_0$ (Example \ref{ex:always_no_maximal_element}); in other words, for such problem spaces, the rate is trivial whenever $\Hyp$ does not satisfy three-points separation and $\mu_0$ is known. 

\begin{example}\label{ex:maximal_or_no_maximal_element}
Let $(\X, \Sigma) = (\R, {\cal B}(\R))$, and $\Hyp = \braces{h_\lambda = \indicator{x\geq \lambda}: \lambda \in \R}$. Then as previously discussed, $\Hyp$ does not separate three points. Now let 
$\mu_0$ for instance denote ${\cal N}(0, 1)$, then clearly for any $0< \alpha < 1/2$, $\Hyp_{\alpha}(\mu_0)$ has a maximal element, namely at $h_{z_\alpha}$ for $z_\alpha \doteq \Phi^{-1} (1-\alpha)$. However, if we instead consider a distribution 
$\mu_0 = \frac{1}{2}{\cal N}(0, 1) + \frac{1}{2}\bm{\delta} (z_\alpha)$ (where $\bm{\delta}$ denotes Dirac delta, i.e., a point mass at $z_{\alpha}$), then 
$\Hyp_{\alpha/2}(\mu_0) = \braces{h_\lambda: \lambda > z_\alpha}$ admits no maximal element.   
\end{example}

\begin{example}\label{ex:always_no_maximal_element} Now consider $\mathcal{X} = \mathbb{N}$, and again a class of one-sided thresholds $\mathcal{H}=\{h_i(x)=\mathbb{1}\{x\geq i\}: i\in \mathbb{N}\}$. Then it should be clear that for every $0< \alpha < 1/2$, for any $\mu_0$, $\Hyp_\alpha(\mu_0)$ has a maximal element. In fact such an example can be extended to any totally ordered discrete set $\X$ and extending $\Hyp$ according to the ordering. 
\end{example}


\subsection{Unknown $\mu_0$, Approximate Level $\alpha$}\label{sec:Unknown}
We now consider the case where both $\mu_0$ and $\mu_1$ are unknown, and only accessible via samples $S_{\mu_0}\sim \mu_0^{n_0}$ and $S_{\mu_1}\sim \mu_1^n$. We will focus on the most common setting where the learner is allowed some \emph{slack}, i.e., may return a hypothesis from $\Hyp_{\alpha + \epsilon_0}$ for some $\epsilon_0$ depending on $n_0$. 

We note that the minimax lower-bounds of Theorem \ref{thm1} hold immediately for the less-studied case where the learner is still to return a hypothesis in $\Hyp_{\alpha}(\mu_0)$ using $S_{\mu_0}$: this is evident from the fact that an exact $\alpha$-learner from that theorem may always sample from $\mu_0$ since it has full knowledge. 


We now formalize the types of learners considered in this section. 

\begin{definition}\label{Distribution-free-learner-delta} Let $0 < \epsilon_0$, $\delta_0 < 1$.  
We call $\hat h$ an {\bf $(\epsilon_0,\delta_0)$-approximate $\alpha$-learner} if it maps any two independent i.i.d. samples $S_{\mu_0}\sim \mu_0^{n_0}$ and $S_{\mu_1}\sim \mu_1^{n}$, $\mu_0, \mu_1$ unknown, to a function in 
$\mathcal{H}_{\alpha+\epsilon_0}(\mu_0)$
with probability $1-\delta_0$ w.r.t. $ S_{\mu_0}, S_{\mu_1}$.
\end{definition}

In particular, we will consider such learners that are given a minimal amount of samples to achieve the $\epsilon_0$ slack. To this end, we need the following definition. 

\begin{definition}[Sampling requirements]\label{sample}
Fix a hypothesis class $\mathcal{H}$ with finite VC dimension, and let $0< \epsilon_0,\delta_0<1$. Then, $n_0(\epsilon_0,\delta_0)$, depending on $\Hyp$, will denote 
the minimum sample size $n_0$ such that we have
$$\sup_{\mu_0}\underset{S_{\mu_0}}{\mathbb{P}}\paren{\sup_{h\in \mathcal{H}}|\hat{R}_{\mu_0}(h)-R_{\mu_0}(h)|>\epsilon_0}\leq \delta_0.$$
    
\end{definition}
Lemma \ref{relative_vc_lemma} in Appendix gives an upper bound of $\tilde{\mathcal{O}}((d_{\mathcal{H}}+\log{1/\delta_0})/{\epsilon_0^2})$ for $n_0(\epsilon_0,\delta_0)$. Note that although $(\epsilon_0, \delta_0)$-approximate $\alpha$-learner is allowed to return a hypothesis from $\mathcal{H}_{\alpha+\epsilon_0}(\mu_0)$, it is still evaluated against the best hypothesis in $\mathcal{H}_{\alpha}(\mu_0)$. This learner, although weaker in the sense of not knowing $\mu_0$, can also be more powerful than exact $\alpha$-learner in the sense that the returned hypothesis from $\mathcal{H}_{\alpha+\epsilon_0}(\mu_0)$ might have strictly smaller error than the $\mu_1$-risk minimizer over $\mathcal{H}_{\alpha}(\mu_0)$. Nevertheless, similar lower-bounds hold since for some $\mu_0$'s, it holds that $\mathcal{H}_{\alpha+\epsilon_0}(\mu_0) = \mathcal{H}_{\alpha}(\mu_0)$.

However, the conditions distinguishing $n^{-1}$ rates from trivial $0$ rates are now different. This is because a main difficulty in establishing matching upper and lower bounds have to do with whether properties of $\Hyp_{\alpha + \epsilon_0}(\mu_0)$, e.g., existence of a maximal element, extend to the empirical counter-parts $\Hyp_{\alpha + \epsilon_0}(\mu_0)$. For this reason, it is easier to consider the subclass of finitely-supported distributions $\mu_0$, which include empirical distributions, and the structure they induce on subclasses of $\Hyp$.

\begin{theorem}\label{thm2}
 Let $\cal H$ be a hypothesis class with finite VC dimension $d_{\cal H}$. Fix any $0< \alpha < \frac{1}{2}$, $0<\delta_0<\frac{1}{2}$, and $0<\epsilon_0\leq\frac{\alpha}{d_{\mathcal{H}}}$ satisfying $\alpha+2\epsilon_0<\frac{1}{2}$. Let $\cal U$ denote all pairs of distributions $(\mu_0, \mu_1)$ over  $(\mathcal X, \Sigma)$ such that the set $\mathcal{H}_{\alpha}(\mu_0)$ is non-empty. In what follows, we let $\hat h$ denote any $(\epsilon_0,\delta_0)$-approximate $\alpha$-learner, according to Definition \ref{Distribution-free-learner-delta}, operating on $S_{\mu_0}\sim \mu_0^{n_0}$, and $S_{\mu_1}\sim \mu_0^{n}$. We let $n_0 = n_0(\frac{\epsilon_0}{2},\delta_0)$, and assume $n$ is large enough so that $\frac{d_{\mathcal{H}}}{n}\leq \frac{1}{2}.$  
 
\item[\rm (i)] Suppose $\cal H$ separates three points. We then have for some universal constants $c, C$ that  
\begin{align*}
    c\cdot \sqrt{\frac{d_{\cal H}}{n}} \  \leq  \ \inf_{\hat h} \sup_{\cal U} \Ex_{S_{\mu_0},S_{\mu_1}} \ \bigg[ \mathcal{E}_{\alpha}(\hat h)\cdot\mathbb{1}\big\{\hat{h}\in \mathcal{H}_{\alpha+\epsilon_0}(\mu_0)\big\}\bigg] \ \lesssim \ C \cdot\sqrt{\frac{d_{\cal H}}{n}} +  \delta_0.
\end{align*}
\item [\rm (ii)] Suppose $\cal H$ does not separate three points. The minimax rate is then of strictly lower order either $d_{\mathcal{H}}/n$ or $0$. More specifically, the following holds. 

 \begin{enumerate}
     \item[(a)] Suppose there exists a distribution $\mu_0$ over $(\mathcal{X},\Sigma)$ with finite support, such that $\mathcal{H}_{\alpha+\epsilon_0}(\mu_0)$ does not admit a maximal element. We then have for some universal constants $c,C$ that
     \begin{align*}
    c\cdot\frac{d_{\mathcal{H}}}{n}\ \leq\ \inf_{\hat h} \sup_{{\cal{U}}} \Ex_{S_{\mu_0},S_{\mu_1}} \ \bigg[ \mathcal{E}_{\alpha}(\hat h)\cdot\mathbb{1}\big\{\hat{h}\in \mathcal{H}_{\alpha+\epsilon_0}(\mu_0)\big\}\bigg] \ \lesssim \ C\cdot{\frac{d_{\cal H}}{n}}+\delta_0.
\end{align*}
     \item[(b)] If for all distributions $\mu_0$ over $(\mathcal{X},\Sigma)$ with finite support, $\mathcal{H}_{\alpha+\epsilon_0}(\mu_0)$ admits a maximal element, then for all $(\mu_0,\mu_1)\in \mathcal{U}$ we have 
     \begin{align*}
        \inf_{\hat h}\ \mathbb{P}\bigg(\mathcal{E}_{\alpha}(\hat h)\cdot\mathbb{1}\big\{\hat{h}\in \mathcal{H}_{\alpha+\epsilon_0}(\mu_0)\big\}=0 \bigg)\ \geq\ 1-\delta_0.
     \end{align*}

 \end{enumerate}
\end{theorem}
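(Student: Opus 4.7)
The overall plan mirrors the proof of Theorem~\ref{thm1}, with an added layer of $\mu_0$-concentration to handle the unknown constraint. Throughout, I would work on the event
\[A_0 \doteq \Big\{\sup_{h\in \Hyp} \bigl|\hat R_{\mu_0}(h) - R_{\mu_0}(h)\bigr| \leq \epsilon_0/2\Big\},\]
which by Definition~\ref{sample} and the choice $n_0 = n_0(\epsilon_0/2, \delta_0)$ satisfies $\mathbb{P}(A_0) \geq 1-\delta_0$. On $A_0$ one has the sandwich
\[\Hyp_{\alpha}(\mu_0) \ \subset\ \Hyp_{\alpha+\epsilon_0/2}(\hat\mu_0) \ \subset\ \Hyp_{\alpha+\epsilon_0}(\mu_0),\]
and off $A_0$, a trivial bound of $1$ on either $\mathcal{E}_\alpha$ or on the indicator $\mathbb{1}\{\hat h\in \Hyp_{\alpha+\epsilon_0}(\mu_0)\}$ produces the additive $\delta_0$ appearing in the upper bounds of parts (i) and (ii.a).

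For the lower bounds of (i) and (ii.a), I would transfer the constructions of Theorem~\ref{thm1} directly. The key observation is that each hard instance can be modified so that $\mu_0$ has an ``$\epsilon_0$-gap'' (concentrating point masses so that $R_{\mu_0}(h)$ avoids the interval $(\alpha,\alpha+\epsilon_0]$ for every $h\in\Hyp$), which forces $\Hyp_{\alpha+\epsilon_0}(\mu_0) = \Hyp_{\alpha}(\mu_0)$; in (ii.a) this modification is applied to the finite-support $\mu_0$ already guaranteed by hypothesis. For such $\mu_0$, any $(\epsilon_0,\delta_0)$-approximate $\alpha$-learner $\hat h(S_{\mu_0},S_{\mu_1})$ can be converted into an exact $\alpha$-learner $\bar h$ on $S_{\mu_1}$ alone (using knowledge of $\mu_0$ to draw $S_{\mu_0}$ internally, running $\hat h$, and defaulting to a fixed $h_0\in\Hyp_\alpha(\mu_0)$ whenever $\hat h \notin \Hyp_{\alpha+\epsilon_0}(\mu_0)$), yielding
\[\mathbb{E}\,\mathcal{E}_\alpha(\bar h)\ \leq\ \mathbb{E}\bigl[\mathcal{E}_\alpha(\hat h)\,\mathbb{1}\{\hat h \in \Hyp_{\alpha+\epsilon_0}(\mu_0)\}\bigr]+\delta_0.\]
The Theorem~\ref{thm1} lower bounds of $\sqrt{d_\Hyp/n}$ and $d_\Hyp/n$ then transfer, with $\delta_0$ absorbed into the leading constants under the stated regimes $\delta_0<1/2$ and $\epsilon_0\leq \alpha/d_\Hyp$.

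For the upper bounds of (i) and (ii.a), I would take the empirical-constraint ERM
\[\hat h \ \in\ \argmin_{h \in \Hyp_{\alpha+\epsilon_0/2}(\hat\mu_0)} \hat R_{\mu_1}(h).\]
On $A_0$ the sandwich guarantees both $\hat h\in \Hyp_{\alpha+\epsilon_0}(\mu_0)$ and that every benchmark $h\in\Hyp_\alpha(\mu_0)$ is feasible. For (i), standard VC uniform deviation of $\hat R_{\mu_1}$ over $\Hyp$ gives the $\sqrt{d_\Hyp/n}$ rate. For (ii.a), absence of three-points separation combined with $2(\alpha+\epsilon_0)<1$ forces $\Hyp_{\alpha+\epsilon_0}(\mu_0)$ and its empirical counterpart to be totally ordered under set-inclusion of $\{h=1\}$; the ERM then reduces to detecting a ``switch point'' in a nested chain, and a Bernstein-type bound on the $\mu_1$-mass of consecutive symmetric differences (paralleling Theorem~\ref{thm1}(ii.a)) yields the $d_\Hyp/n$ rate.

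Finally, for (ii.b), the learner outputs $\hat h^\ast$, a maximal element of an empirical constraint set extracted from $\hat\mu_0$ via the hypothesis (noting $\hat\mu_0$ is finite-support). On $A_0$, maximality ensures that every $h\in \Hyp_\alpha(\mu_0)$ satisfies $\{h=1\}\subset \{\hat h^\ast=1\}$, so $R_{\mu_1}(\hat h^\ast)\leq R_{\mu_1}(h)$ and hence $\mathcal{E}_\alpha(\hat h^\ast)=0$; since $\mathbb{P}(A_0)\geq 1-\delta_0$, the product in the statement is zero with probability $\geq 1-\delta_0$. The main obstacle I anticipate is calibrating the level at which the maximal element is extracted so that (a) the hypothesis of (ii.b) actually applies, (b) the resulting $\hat h^\ast$ qualifies as an $(\epsilon_0,\delta_0)$-approximate $\alpha$-learner (i.e., $R_{\mu_0}(\hat h^\ast)\leq \alpha+\epsilon_0$ on $A_0$), and (c) the benchmark $\Hyp_\alpha(\mu_0)$ is contained in the empirical constraint set on $A_0$. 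A naive choice creates an $\epsilon_0/2$ slack mismatch, so the argument will likely require either rescaling $\hat\mu_0$ to a finite-support $\tilde\mu_0$ for which $\Hyp_{\alpha+\epsilon_0}(\tilde\mu_0)=\Hyp_{\alpha+\epsilon_0/2}(\hat\mu_0)$, or a direct structural argument propagating max-existence downward through the finite risk spectrum $\{R_{\hat\mu_0}(h):h\in\Hyp\}$ using the total-order structure inherited from the failure of three-points separation.
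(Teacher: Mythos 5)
Your overall plan tracks the paper's route closely: condition on the good event $A_0=\{\sup_{h}|\hat R_{\mu_0}(h)-R_{\mu_0}(h)|\leq \epsilon_0/2\}$, use the sandwich $\Hyp_{\alpha}(\mu_0)\subset\Hyp_{\alpha+\epsilon_0/2}(\hat\mu_0)\subset\Hyp_{\alpha+\epsilon_0}(\mu_0)$ on $A_0$, run ERM (or return the maximal element) over the empirical constraint set for the upper bounds, build a $\mu_0$ with an $\epsilon_0$-gap so that $\Hyp_\alpha(\mu_0)=\Hyp_{\alpha+\epsilon_0}(\mu_0)$ for the lower bounds, and invoke a finite-support reconstruction for (ii.a) and (ii.b). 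These are exactly the paper's Algorithms~\ref{algorithm:1:separating-three-points}--\ref{algorithm:not-separating-three-points} together with Lemmas~\ref{upp-sep_1}, \ref{upp-not_sep_1}, \ref{4.3}, \ref{lower}, and \ref{lemma4.8}. The one place your argument does not actually establish the stated bound is the lower-bound reduction.

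Specifically, your step
\[
\mathbb{E}\,\mathcal{E}_\alpha(\bar h)\ \leq\ \mathbb{E}\bigl[\mathcal{E}_\alpha(\hat h)\,\mathbb{1}\{\hat h \in \Hyp_{\alpha+\epsilon_0}(\mu_0)\}\bigr]+\delta_0,
\]
combined with Theorem~\ref{thm1}'s expectation lower bound, yields only $\mathbb{E}[\mathcal{E}_\alpha(\hat h)\mathbb{1}\{\cdot\}]\geq c\sqrt{d_\Hyp/n}-\delta_0$ (resp.\ $\geq c\,d_\Hyp/n-\delta_0$). The constraint $\delta_0<1/2$ does not make $\delta_0$ negligible relative to $\sqrt{d_\Hyp/n}$, so the claim that $\delta_0$ is ``absorbed into the leading constants'' is not justified, and for large $\delta_0$ the resulting bound is vacuous. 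The paper avoids this additive loss by proving the lower bound directly on the tail probability of the product random variable: Lemma~\ref{4.3} and Lemma~\ref{lower} bound $\sup_{\mathcal{U}(\mu_0)}\mathbb{P}\big(\mathcal{E}_\alpha(\hat h)\,\mathbb{1}\{\hat h\in\Hyp_{\alpha+\epsilon_0}(\mu_0)\}>c\cdot r_n\big)$ for \emph{randomized} $(\alpha,\delta_0)$-learners, conditioning on $\hat h\in\Hyp_{\alpha+\epsilon_0}(\mu_0)$ and then multiplying by $\mathbb{P}(\hat h\in\Hyp_{\alpha+\epsilon_0}(\mu_0))\geq 1-\delta_0\geq 1/2$. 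This turns the $\delta_0$ dependence multiplicative and absorbable; Markov then gives the clean $c\sqrt{d_\Hyp/n}$ (resp.\ $c\,d_\Hyp/n$) in expectation. You should replace your expectation-level reduction with this probability-level reduction, noting (as you correctly observed) that once $\Hyp_\alpha(\mu_0)=\Hyp_{\alpha+\epsilon_0}(\mu_0)$, an $(\epsilon_0,\delta_0)$-approximate learner is automatically a randomized $(\alpha,\delta_0)$-learner, so no explicit ``default to $h_0$'' conversion is needed.

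For (ii.b), you correctly flag the $\epsilon_0/2$ vs.\ $\epsilon_0$ calibration mismatch and propose the right fix: construct a finite-support $\tilde\mu_0$ with $\Hyp_{\alpha+\epsilon_0}(\tilde\mu_0)=\Hyp_{\alpha+\epsilon_0/2}(\hat\mu_0)$ so the (ii.b) hypothesis forces a maximal element of the empirical constraint set. This is precisely what the paper does in the proof of Lemma~\ref{upp-not_sep_1}(b), by contradiction. Your alternative idea of propagating max-existence ``downward through the finite risk spectrum'' is not enough by itself---maximality at level $\alpha+\epsilon_0$ for $\hat\mu_0$ does not automatically descend to level $\alpha+\epsilon_0/2$ without perturbing the measure---so you should commit to the measure-reconstruction route. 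The (ii.a) lower bound also needs Lemma~\ref{lemma4.8} to first pass from the given finite-support $\mu_0$ (with no maximal element in $\Hyp_{\alpha+\epsilon_0}(\mu_0)$) to a $\mu_0'$ satisfying both $\Hyp_\alpha(\mu_0')=\Hyp_{\alpha+\epsilon_0}(\mu_0')$ and non-existence of a maximal element simultaneously; your ``$\epsilon_0$-gap'' modification is the right idea but you should verify, as the paper does, that it does not accidentally create a maximal element.
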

In particular, letting $\delta_0=\mathcal{O}(1/n)$ ensures that upper and lower bounds match in (i) and (ii.a). Before we discuss the main technicality in establishing the above result, we first verify that the situations described in (ii.a) and (ii.b) are non-empty. First, notice that (ii.b) is already covered by Example \ref{ex:always_no_maximal_element}. For (ii.a) we have the following example: 

\begin{example}
Let $(\X, \Sigma) = (\R, {\cal B}(\R))$, and $\Hyp = \braces{h_\lambda = \indicator{x\geq \lambda}: \lambda \in \R}$. Then, fix any positive $0< \alpha, \epsilon_0 < 1/2$, and consider a measure $\mu_0$ on points $x_1< x_2< x_3$, assigning mass, respectively, $1-\alpha-\epsilon_0/2$, 
 $\epsilon_0/2$, and $\alpha$. Then $\mathcal{H}_{\alpha+\epsilon_0}(\mu_0)=\{h_{\lambda}=\indicator{x\geq \lambda}:\lambda>x_1\}$ does not admit a maximal element, while $\mathcal{H}_{\alpha}(\mu_0)=\{h_{\lambda}=\indicator{x\geq \lambda}: \lambda\geq x_3\}$ admits a maximal element.
\end{example}


Part (i) of the above Theorem \ref{thm2} shares the same construction as part(i) of Theorem \ref{thm1}: we make use of a $\mu_0$ such that $\mathcal{H}_{\alpha+\epsilon_0}(\mu_0)=\mathcal{H}_{\alpha}(\mu_0)$, and notice that lower-bounds for approximate learners imply the same for exact learners. See Lemma \ref{4.3} of Section \ref{sec: Minimax Lower-Bounds}.

Part (ii.a) of Theorem \ref{thm2} is established by reducing to part (ii.a) of Theorem \ref{thm1}. Namely, we show that the conditions imply the existence of a different measure $\mu_0'$ such that $\mathcal{H}_{\alpha}(\mu'_0)=\mathcal{H}_{\alpha+\epsilon_0}(\mu'_0)$ which also does not admit a maximal element. See Lemmas \ref{lower} and \ref{lemma4.8} of Section \ref{sec: Minimax Lower-Bounds}.

Part (ii b) of Theorem \ref{thm2} is established by showing the existence of a learner achieving $0$ error with high probability. The learner in question relies on the empirical set $\mathcal{H}_{\alpha+\epsilon_0/2}(\hat{\mu}_0)$, using the fact that, with high probability, 
$\Hyp_{\alpha}(\mu_0) \subset \mathcal{H}_{\alpha+\epsilon_0/2}(\hat{\mu}_0) \subset \Hyp_{\alpha + \epsilon_0}(\mu_0)$. Thus, it suffices to show that, under the condition (ii.b), for \emph{any} $\mu_0$, the set $\mathcal{H}_{\alpha+\epsilon_0/2}(\hat{\mu}_0)$ also admits a maximal element which the learner returns. See Lemma \ref{upp-not_sep_1} of Section \ref{sec: Generic Upper-Bounds}.
\section{Overview of Analysis}\label{sec:4}
Sections \ref{sec: Generic Upper-Bounds}, \ref{sec: Minimax Lower-Bounds} provide supporting results for Theorems \ref{thm1} and \ref{thm2}, proved in Sections \ref{sec: proof:theorem 1} and \ref{sec: Proof: theorem 2}. 

\subsection{Supporting Upper-Bounds}\label{sec: Generic Upper-Bounds}

All algorithms in this section take as input two datasets $S_{\mu_0}\sim \mu_0^{n_0}$, $S_{\mu_1}\sim \mu_1^{n}$. The corresponding empirical risks are denoted as $\hat{R}_{\mu_0}$, $\hat{R}_{\mu_1}$. Additionally, in the following, whenever $\mu_0$ is known set $\hat{\mu}_0=\mu_0$ and $\hat{R}_{\mu_0}=R_{\mu_0}$. Furthermore, the error rates are expressed in terms of $\epsilon_n=\frac{d_{\mathcal{H}}\log{2n}+\log{\frac{8}{\delta}}}{n}$.

The first algorithm below serves to establish the minimax upper bounds of Theorem \ref{thm1} and \ref{thm2} for the case where $\mathcal{H}$ separates three points.

\begin{alg}\label{algorithm:1:separating-three-points}
Let $\tilde{\mathcal{H}}=\mathcal{H}_{\alpha}(\mu_0)$ when $\mu_0$ is known, otherwise $\tilde{\mathcal{H}}=\mathcal{H}_{\alpha+\epsilon_0/2}(\hat{\mu}_0)$. Define
\begin{align}\label{Algorithm_1}
\hat{h}=\argmin_{} \braces{\hat{R}_{\mu_1}(h): h\in \tilde{\mathcal{H}}} .
\end{align}
\end{alg}
\begin{lemma}[$\mathcal{H}$ separates three points]\label{upp-sep_1}
Fix any $0<\alpha<\frac{1}{2}$ and $\delta>0$. Let $\delta_0,\epsilon_0>0$ such that $\alpha+\epsilon_0<\frac{1}{2}$ when $\mu_0$ is unknown, otherwise $\epsilon_0=\delta_0=0$. Moreover, let $\cal H$ be a class with VC dimension $d_{\cal H}$ that separates three points and $(\mu_0,\mu_1)$ be a pair of distributions such that the set $\mathcal{H}_{\mu_0}(\alpha)$ is non-empty. Suppose that there are $n_0=n_0(\frac{\epsilon_0}{2},\delta_0)$, according to Definition \ref{sample}, and $n$ i.i.d. samples from $\mu_0$ and $\mu_1$, denoted by $S_{\mu_0}$ and $S_{\mu_1}$, respectively. Let $\hat{h}$ be the hypothesis returned by Algorithm \ref{algorithm:1:separating-three-points}. Then, with probability at least $1-\delta_0$, $\hat{h}\in \mathcal{H}_{\alpha+\epsilon_0}(\mu_0)$. Furthermore, we have
\begin{align}\label{upper_bound_separate_lemma}
\underset{S_{\mu_0},S_{\mu_1}}{\mathbb{P}}\bigg(\mathcal{E}_{\alpha}(\hat{h})\cdot \indicator{\hat{h}\in \mathcal{H}_{\alpha+\epsilon_0}(\mu_0)}\leq 4\sqrt{\epsilon}_n\bigg)
\geq 1-\delta_0-\delta.
\end{align}

\end{lemma}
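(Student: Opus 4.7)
The plan is to combine two high-probability uniform-convergence events and then exploit the ERM optimality of $\hat h$ on $\tilde{\mathcal H}$; three-points-separation plays no role in the upper bound itself (Algorithm~\ref{algorithm:1:separating-three-points} works verbatim without it) and is needed only for the matching lower bound. Define $E_0$ as the event $\sup_{h\in\mathcal H}|\hat R_{\mu_0}(h)-R_{\mu_0}(h)|\leq \epsilon_0/2$ when $\mu_0$ is unknown, and as the sure event otherwise (so the known-$\mu_0$ case is recovered by setting $\epsilon_0=\delta_0=0$). The hypothesis $n_0=n_0(\epsilon_0/2,\delta_0)$, together with Definition~\ref{sample}, yields $\mathbb{P}(E_0)\geq 1-\delta_0$. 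On $E_0$ a short deviation argument gives the sandwich
\begin{equation*}
\mathcal{H}_{\alpha}(\mu_0)\ \subset\ \mathcal{H}_{\alpha+\epsilon_0/2}(\hat\mu_0)\ =\ \tilde{\mathcal H}\ \subset\ \mathcal{H}_{\alpha+\epsilon_0}(\mu_0),
\end{equation*}
since any $h$ with $R_{\mu_0}(h)\leq\alpha$ satisfies $\hat R_{\mu_0}(h)\leq\alpha+\epsilon_0/2$, and any $h\in\tilde{\mathcal H}$ satisfies $R_{\mu_0}(h)\leq\alpha+\epsilon_0$. This simultaneously certifies the feasibility claim $\hat h\in\mathcal{H}_{\alpha+\epsilon_0}(\mu_0)$ and keeps the target class $\mathcal{H}_\alpha(\mu_0)$ available inside $\tilde{\mathcal H}$ for the ERM comparison to follow.

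Next I would invoke VC uniform convergence on $S_{\mu_1}$ (via the relative-VC bound of Lemma~\ref{relative_vc_lemma}) to produce an event $E_1$ of probability at least $1-\delta$ on which
\begin{equation*}
\sup_{h\in\mathcal H}\bigl|\hat R_{\mu_1}(h)-R_{\mu_1}(h)\bigr|\ \leq\ 2\sqrt{\epsilon_n},
\end{equation*}
where $\epsilon_n=(d_{\mathcal H}\log(2n)+\log(8/\delta))/n$. On $E_0\cap E_1$, fix any $h\in\mathcal{H}_\alpha(\mu_0)$; by the sandwich, $h\in\tilde{\mathcal H}$, so the definition of $\hat h$ gives $\hat R_{\mu_1}(\hat h)\leq\hat R_{\mu_1}(h)$. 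Applying the deviation bound on both sides yields $R_{\mu_1}(\hat h)\leq R_{\mu_1}(h)+4\sqrt{\epsilon_n}$, and passing to an infimizing sequence over $h\in\mathcal{H}_\alpha(\mu_0)$ (the infimum need not be attained, which is the only routine subtlety) produces $\mathcal{E}_\alpha(\hat h)\leq 4\sqrt{\epsilon_n}$. Since the indicator $\indicator{\hat h\in\mathcal{H}_{\alpha+\epsilon_0}(\mu_0)}$ is bounded by $1$, this in particular bounds the product in \eqref{upper_bound_separate_lemma}; a union bound over $E_0^c$ and $E_1^c$ gives the failure probability at most $\delta_0+\delta$.

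The main point to watch, and the reason the algorithm inflates the empirical threshold from $\alpha$ to $\alpha+\epsilon_0/2$ rather than using $\mathcal{H}_\alpha(\hat\mu_0)$, is that without this slack the left-hand containment $\mathcal{H}_\alpha(\mu_0)\subset\tilde{\mathcal H}$ can fail on $E_0$, eliminating the reference hypothesis against which the excess risk is measured and breaking the ERM step; the right-hand containment then consumes exactly the remaining $\epsilon_0/2$ of budget, so the symmetric split is essentially forced. Everything else in the argument is generic VC theory, which is why the conclusion is insensitive to whether $\mathcal H$ separates three points and to whether $\mu_0$ is known (the latter being just the degenerate case $\epsilon_0=\delta_0=0$, in which $E_0$ is trivial and only the $\mu_1$-side deviation is used).
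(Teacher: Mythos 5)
Your proposal is correct and follows essentially the same route as the paper's own proof: the event $E_0$ from Definition~\ref{sample} gives the sandwich $\mathcal{H}_{\alpha}(\mu_0)\subset\mathcal{H}_{\alpha+\epsilon_0/2}(\hat\mu_0)\subset\mathcal{H}_{\alpha+\epsilon_0}(\mu_0)$, the relative VC bound on $S_{\mu_1}$ gives $E_1$ with uniform deviation $2\sqrt{\epsilon_n}$, and the ERM inequality chained through any $h\in\mathcal{H}_\alpha(\mu_0)$ gives the $4\sqrt{\epsilon_n}$ excess-risk bound, with a union bound over $E_0^c$ and $E_1^c$ supplying the $\delta_0+\delta$ failure probability. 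Your observations that three-points-separation is irrelevant to this upper bound and that the $\epsilon_0/2$ inflation is precisely what makes both containments in the sandwich hold are accurate and in the spirit of the paper's argument, even if the paper does not spell them out.
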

Next, we consider the following procedure for the case where $\mathcal{H}$ does not separate three points.
\begin{alg}\label{algorithm:not-separating-three-points} Let $\tilde{\mathcal{H}}=\mathcal{H}_{\alpha}(\mu_0)$ when $\mu_0$ is known, otherwise $\tilde{\mathcal{H}}=\mathcal{H}_{\alpha+\epsilon_0/2}(\hat{\mu}_0)$. Define $\hat{h}$ as the output of the following procedure:
\begin{align*}
        &\text{If}\ \tilde{\mathcal{H}}\ \text{admits a maximal element, then let} \ \hat{h} \ \text{be the maximal element.}\nonumber\\
        & \text{Otherwise, let } \hat{h}=\argmin_{} \braces{\hat{R}_{\mu_1}(h): h\in \tilde{\mathcal{H}}}.
    \end{align*}
 \end{alg}
\begin{lemma}[$\mathcal{H}$ does not separate three points]\label{upp-not_sep_1}
Fix any $0<\alpha<\frac{1}{2}$ and $\delta>0$. Let $\delta_0,\epsilon_0>0$ such that $\alpha+\epsilon_0<\frac{1}{2}$ when $\mu_0$ is unknown, otherwise $\epsilon_0=\delta_0=0$. Moreover, let $\cal H$ be a hypothesis class with VC dimension $d_{\cal H}$ that separates three points and $(\mu_0,\mu_1)$ be a pair of distributions such that the set $\mathcal{H}_{\alpha}(\mu_0)$ is non-empty. Suppose that there are $n_0=n_0(\frac{\epsilon_0}{2},\delta_0)$, according to Definition \ref{sample}, and $n$ i.i.d. samples from $\mu_0$ and $\mu_1$, respectively. Let $\hat{h}$ be the hypothesis returned by Algorithm \ref{algorithm:not-separating-three-points}. 
\begin{enumerate}
    \item[(a)] With probability at least $1-\delta_0$, we have $\hat{h}\in \mathcal{H}_{\alpha+\epsilon_0}(\mu_0)$. Furthermore, we have for some universal constant $C$
\begin{align}\label{upper_notseparate_lemma_4}
\underset{S_{\mu_0},S_{\mu_1}}{\mathbb{P}}\bigg(\mathcal{E}_{\alpha}(\hat{h})\cdot\indicator{\hat{h}\in\mathcal{H}_{\alpha+\epsilon_0}(\mu_0)}\leq  C\cdot \epsilon_n\bigg)
\geq 1-\delta_0-\delta.
\end{align}
\item[(b)] Suppose that for all distributions $\mu_0$ over  $(\mathcal{X},\Sigma)$ with finite support, $\mathcal{H}_{\alpha+\epsilon_0}(\mu_0)$ admits a maximal
element. Then with probability at least $1-\delta_0$, $\hat{h}\in \mathcal{H}_{\alpha+\epsilon_0}(\mu_0)$ and $\mathcal{E}_{\alpha}(\hat{h})=0$.
\end{enumerate}
\end{lemma}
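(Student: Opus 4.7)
The backbone is a structural nesting fact: whenever $\mathcal{H}$ does not separate three points, any two $h,h' \in \mathcal{H}$ with $R_{\mu_0}(h)+R_{\mu_0}(h') < 1$ have nested positive sets. Indeed, if neither $\{h=1\}\subseteq\{h'=1\}$ nor the reverse held, Definition \ref{def:three_points} would force $\{h=1\}\cup\{h'=1\}=\mathcal{X}$, so the complements would be disjoint and $R_{\mu_0}(h)+R_{\mu_0}(h') \geq 1$, a contradiction. Since $\alpha+\epsilon_0 < 1/2$, the fact applies throughout $\mathcal{H}_{\alpha+\epsilon_0}(\mu_0)$. By Definition \ref{sample}, with probability at least $1-\delta_0$ the event $E \doteq \{\sup_{h\in\mathcal{H}}|\hat R_{\mu_0}(h)-R_{\mu_0}(h)|\leq \epsilon_0/2\}$ holds (trivially when $\mu_0$ is known and $\epsilon_0=\delta_0=0$); on $E$ we have $\mathcal{H}_\alpha(\mu_0)\subseteq\tilde{\mathcal{H}}\subseteq\mathcal{H}_{\alpha+\epsilon_0}(\mu_0)$, so in either branch of Algorithm \ref{algorithm:not-separating-three-points} the output $\hat h$ lies in $\mathcal{H}_{\alpha+\epsilon_0}(\mu_0)$, giving the first claim in both (a) and (b).

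For (a), condition on $E$ and on a relative-VC event described below. For each $\varepsilon>0$ pick $h^*_\varepsilon\in\mathcal{H}_\alpha(\mu_0)$ with $R_{\mu_1}(h^*_\varepsilon)\leq \inf_{h\in\mathcal{H}_\alpha(\mu_0)}R_{\mu_1}(h)+\varepsilon$. Since $h^*_\varepsilon\in\tilde{\mathcal{H}}$, ERM gives $\hat R_{\mu_1}(\hat h)\leq \hat R_{\mu_1}(h^*_\varepsilon)$, and by the nesting fact applied to $(\hat h, h^*_\varepsilon)$ (both satisfying $R_{\mu_0}<1/2$) their positive sets are nested. If $\{h^*_\varepsilon=1\}\subseteq\{\hat h=1\}$, then $R_{\mu_1}(\hat h)\leq R_{\mu_1}(h^*_\varepsilon)$ and $\mathcal{E}_\alpha(\hat h)\leq \varepsilon$. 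Otherwise set $D\doteq\{h^*_\varepsilon=1\}\setminus\{\hat h=1\}$; then $\hat\mu_1(D)=\hat R_{\mu_1}(\hat h)-\hat R_{\mu_1}(h^*_\varepsilon)\leq 0$, forcing $\hat\mu_1(D)=0$. The difference class $\mathcal{G} \doteq \{\{h_2=1\}\setminus\{h_1=1\}:h_1,h_2\in\mathcal{H}\}$ has VC dimension $O(d_{\mathcal{H}})$, so a relative Vapnik-Chervonenkis inequality (cf.\ Lemma \ref{relative_vc_lemma}) guarantees, on an event of probability at least $1-\delta$, that every $G\in\mathcal{G}$ with $\hat\mu_1(G)=0$ satisfies $\mu_1(G)\leq C\epsilon_n$. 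Thus $\mu_1(D)\leq C\epsilon_n$ and $\mathcal{E}_\alpha(\hat h)\leq \mu_1(D)+\varepsilon\leq C\epsilon_n+\varepsilon$. Letting $\varepsilon\downarrow 0$ closes (a), with overall failure probability at most $\delta_0+\delta$.

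For (b), the plan is to show that on $E$, the set $\tilde{\mathcal{H}}$ admits a maximal element so Algorithm \ref{algorithm:not-separating-three-points} enters its first branch, and that the returned maximal element achieves zero excess risk. By the nesting fact, $\tilde{\mathcal{H}}$ is totally ordered by inclusion. Applying the hypothesis at $\mu_0\doteq\hat\mu_0$ (which is finitely supported) already yields a maximal element in $\mathcal{H}_{\alpha+\epsilon_0}(\hat\mu_0)\supseteq \tilde{\mathcal{H}}$. I then transfer this to $\tilde{\mathcal{H}}$ via a scaling construction: set $c\doteq(1-\alpha-\epsilon_0)/(1-\alpha-\epsilon_0/2)\in(0,1)$ and choose a finitely supported probability $\nu$ so that $\mu'\doteq c\hat\mu_0+(1-c)\nu$ satisfies $\mathcal{H}_{\alpha+\epsilon_0}(\mu')=\tilde{\mathcal{H}}=\mathcal{H}_{\alpha+\epsilon_0/2}(\hat\mu_0)$; invoking the hypothesis at $\mu'$ delivers a maximal $\hat h\in\tilde{\mathcal{H}}$. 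Because $\mathcal{H}_\alpha(\mu_0)\subseteq\tilde{\mathcal{H}}$ on $E$, every $h\in\mathcal{H}_\alpha(\mu_0)$ has $\{h=1\}\subseteq\{\hat h=1\}$, so $R_{\mu_1}(\hat h)\leq \inf_{h\in\mathcal{H}_\alpha(\mu_0)}R_{\mu_1}(h)$ and $\mathcal{E}_\alpha(\hat h)=0$.

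The main technical obstacle lies precisely in the transfer step of (b): making $\mathcal{H}_{\alpha+\epsilon_0}(\mu')$ coincide with $\mathcal{H}_{\alpha+\epsilon_0/2}(\hat\mu_0)$ through the affine combination above requires that $\nu$ be concentrated on a point where the relevant hypotheses uniformly evaluate to $1$, which need not exist for arbitrary $\mathcal{H}$. In general the scaling argument must be refined to build $\nu$ from multiple atoms, exploiting the total ordering of $\tilde{\mathcal{H}}$ and the finite support of $\hat\mu_0$ to select atoms that shift the risk threshold as required; once any such $\mu'$ is produced, invoking the hypothesis at $\mu'$ finishes (b). The remainder of the argument is standard empirical process machinery once the nesting fact and the relative VC bound are in place.
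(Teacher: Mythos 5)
Part (a) of your argument matches the paper's proof in all essentials: condition on the event $E$ controlling $\hat R_{\mu_0}$ deviations to establish the sandwiching $\mathcal{H}_\alpha(\mu_0)\subseteq\tilde{\mathcal{H}}\subseteq\mathcal{H}_{\alpha+\epsilon_0}(\mu_0)$, use the nesting fact (Lemma~\ref{total}) to reduce the excess risk to the $\mu_1$-mass of a set difference $D$ with $\hat\mu_1(D)=0$, then apply the relative VC bound to the class of set differences (Lemma~\ref{lemma_new_class}). One minor imprecision: you write ``ERM gives $\hat R_{\mu_1}(\hat h)\leq\hat R_{\mu_1}(h^*_\varepsilon)$'' unconditionally, but Algorithm~\ref{algorithm:not-separating-three-points} returns the maximal element of $\tilde{\mathcal{H}}$ when one exists, and in that branch the ERM inequality need not hold. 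This does not break your argument because in that branch $\{h^*_\varepsilon=1\}\subseteq\{\hat h=1\}$ automatically, so you land in your first sub-case without needing the ERM inequality; the case analysis should just be stated a bit more carefully.

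Part (b) is where you have a genuine gap, and you correctly identify it yourself. Your mixture construction $\mu'=c\hat\mu_0+(1-c)\nu$ would only deliver $\mathcal{H}_{\alpha+\epsilon_0}(\mu')=\mathcal{H}_{\alpha+\epsilon_0/2}(\hat\mu_0)$ if $\nu$ were supported on a point where \emph{every} $h\in\mathcal{H}$ outputs $1$, which need not exist; your closing remark that ``the scaling argument must be refined'' gestures at a repair without producing one. The paper's construction is additive rather than multiplicative and exploits the total order directly: pick any $z$ in the positive set of some $h\in\mathcal{H}_{\alpha+\epsilon_0/2}(\hat\mu_0)$ (if no such $h$ exists the class trivially has a maximal element), let $h^*$ be the maximal element of $\mathcal{H}_{\alpha+\epsilon_0}(\hat\mu_0)$, and form $\mu'_0$ by adding mass $\epsilon_0/2$ at $z$ and removing $\epsilon_0/2$ from the region $\mathrm{supp}(\hat\mu_0)\cap\{h^*=1\}^C$. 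Because all of $\tilde{\mathcal{H}}$ is totally ordered and $\{h=1\}\subseteq\{h^*=1\}$ for every $h\in\mathcal{H}_{\alpha+\epsilon_0}(\hat\mu_0)$, this shift raises $R(h)$ by exactly $\epsilon_0/2$ for every $h$ whose positive set contains $z$ (in particular every $h$ just above the $\alpha+\epsilon_0/2$ threshold) while leaving everything in $\tilde{\mathcal{H}}$ below $\alpha+\epsilon_0$, which gives the required identity $\mathcal{H}_{\alpha+\epsilon_0/2}(\hat\mu_0)=\mathcal{H}_{\alpha+\epsilon_0}(\mu'_0)$. You would need to supply an argument of this flavor — exploiting the total order and a pointwise mass shift rather than global rescaling — to close part (b).
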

\subsection{Supporting Lower-Bounds}\label{sec: Minimax Lower-Bounds}
\begin{figure}[t]
  \centering
\includegraphics[height=0.32\textwidth,width=0.55\textwidth]{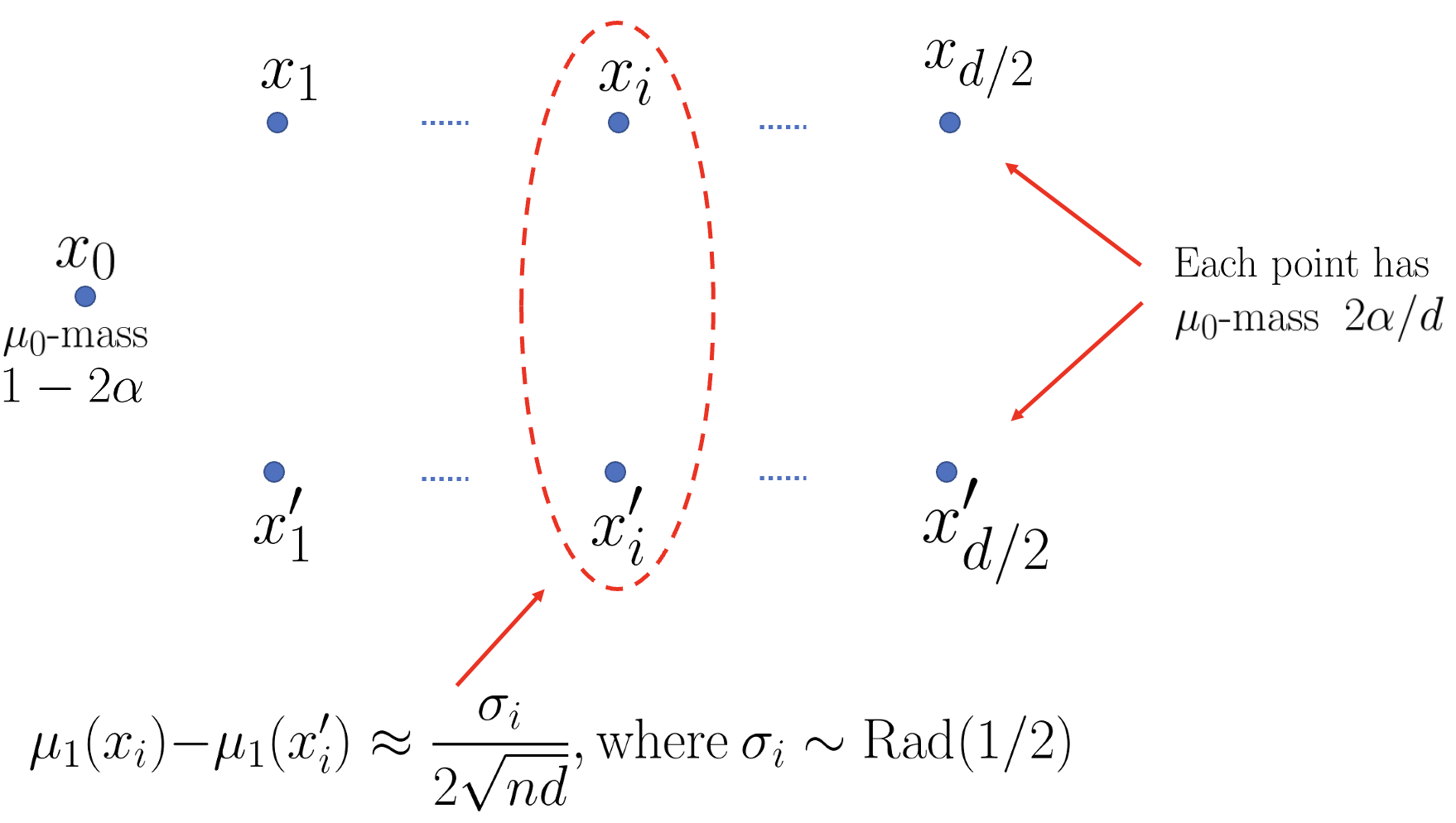}
  \caption{Illustration of $\Omega (1/\sqrt{n})$ lower-bound construction in Lemma \ref{4.3}. Letting $d\approx d_\Hyp$, each $h\in \Hyp_\alpha(\mu_0)$ picks (i.e, is 1 on) at most $d/2$ points out of $\braces{x_i, x_i'}_{i=1}^{d/2}$. 
  The points $(x_i, x_i')$ are paired so that, effectively, the learner's choice of $h\in \Hyp_{\alpha}(\mu_0)$ reduces to deciding for each $i$, which of $x_i$ or $x_i'$ has the highest $\mu_1$-mass (randomized in $\sigma_i$). \emph{Note that this construction requires three-points-separation:} $\exists h,, h'$'s in $\Hyp_\alpha(\mu_0)$, both $0$ on $x_0$, but differing on $x_i, x_i'$ for some $i$. 
  }\label{FIG_VC>16}
\end{figure}

To address both Theorems \ref{thm1} and \ref{thm2} at once in this section, we consider a more powerful variant of learners: these have knowledge of $\mu_0$ in the sense that we lower-bound error over a subfamily ${\cal U}(\mu_0)$ with fixed $\mu_0$, and also are allowed to return a hypothesis from $\Hyp_{\alpha} (\mu_0)$ with probability at least $1- \delta_0$ (according to internal randomness, e.g., by sampling from $\mu_0$). We refer to such $\hat h$ as a {\bf randomized $(\alpha, \delta_0)$-learner}.

\begin{lemma}[$\mathcal{H}$ separates three points]\label{4.3}
    Let $\mathcal{H}$ be a class with VC dimension $d_\mathcal{H}$ that separates three points. Fix $0<\alpha<\frac{1}{2}, 0<\epsilon_0\leq \frac{\alpha}{d_{\mathcal{H}}}$, and $0\leq\delta_0<\frac{1}{2}$. Let $\hat{h}$ denote any randomized $(\alpha, \delta_0)$-learner, as defined above, with access to $S_{\mu_1}\sim \mu_1^{n}$. Assume $n$ is large enough so that $\frac{d_{\mathcal{H}}}{n}\leq 1$. Then $\exists \mu_0$, with $\Hyp_{\alpha}(\mu_0) = \Hyp_{\alpha + \epsilon_0}(\mu_0)$ such that, for any such $\hat h$, for some universal constants $c,c'$: 
\begin{align}\label{low_ineq}
\sup_{\mathcal{U}(\mu_0)} \mathbb{P}\bigg(\mathcal{E}_{\alpha}(\hat{h})\cdot \indicator{\hat{h}\in \mathcal{H}_{\alpha+\epsilon_0}(\mu_0)}>c\cdot\sqrt{\frac{d_{\mathcal{H}}}{n}}
\bigg)\geq c'.
\end{align}
where the probability is taken w.r.t. $S_{\mu_1}$ and the randomness in the algorithm.
\end{lemma}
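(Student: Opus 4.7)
The plan is an Assouad-style lower bound built on a finite sub-family of $\cal H$ extracted from three-points-separation. First I would extract $d = \Theta(d_{\mathcal{H}})$ disjoint pairs $\{(x_i, x_i')\}_{i=1}^d$ and a sink point $x_0$ such that for every $\sigma \in \{\pm 1\}^d$ there exists $h_\sigma \in \mathcal{H}$ with $h_\sigma(x_0) = 0$, $h_\sigma(x_i) = \indicator{\sigma_i = +1}$, and $h_\sigma(x_i') = \indicator{\sigma_i = -1}$. When $d_{\mathcal{H}} \geq 3$, this follows by shattering $2d+1 \leq d_{\mathcal{H}}$ points with $d = \lfloor (d_{\mathcal{H}}-1)/2 \rfloor$. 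When $d_{\mathcal{H}} \in \{1, 2\}$, Definition \ref{def:three_points} directly supplies $d = 1$ via $h_1, h_2$ and $x_0, x_1, x_2$, matching $\Omega(\sqrt{d_{\mathcal{H}}/n}) = \Omega(1/\sqrt{n})$ in this regime.

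\textbf{Choice of $\mu_0$.} Set $\mu_0(\{x_0\}) = 1 - 2\alpha$ and $\mu_0(\{x_i\}) = \mu_0(\{x_i'\}) = \alpha/d$. Since $\mu_0$ is supported on the $2d+1$ atoms, every $R_{\mu_0}(h)$ lies on the grid $\{(1-2\alpha)\indicator{h(x_0)=1} + k\alpha/d : 0 \leq k \leq 2d\}$. The minimum spacing is $\alpha/d \geq 2\alpha/(d_{\mathcal{H}}-1) > \epsilon_0$, and a short case check around the jump of size $1-2\alpha$ at $h(x_0)=1$ shows no grid point lies in $(\alpha, \alpha+\epsilon_0]$, yielding $\mathcal{H}_\alpha(\mu_0) = \mathcal{H}_{\alpha+\epsilon_0}(\mu_0)$. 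Each $h_\sigma$ sits exactly on the boundary $R_{\mu_0}(h_\sigma) = \alpha$.

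\textbf{Hard family and excess-risk formula.} For $\sigma \in \{\pm 1\}^d$ and $\gamma > 0$ to be fixed, let $\mu_1^\sigma(\{x_i\}) = (1+\gamma\sigma_i)/(2d)$, $\mu_1^\sigma(\{x_i'\}) = (1-\gamma\sigma_i)/(2d)$, $\mu_1^\sigma(\{x_0\}) = 0$. Writing $(a_i, b_i) = (\hat h(x_i), \hat h(x_i'))$ and letting $n_r, n_w, n_2, n_0$ count pairs on which $\hat h$ is respectively the correct singleton, wrong singleton, $(1,1)$, or $(0,0)$, a direct computation gives $R_{\mu_1^\sigma}(h_\sigma) = (1-\gamma)/2$ and, for any feasible $\hat h$ with $\hat h(x_0) = 0$,
\begin{equation*}
\mathcal{E}_\alpha(\hat h) \;=\; \frac{1}{2d}\bigl[\gamma(2n_w + n_2 + n_0) + (n_0 - n_2)\bigr] \;\geq\; \frac{\gamma}{d}\Bigl(n_w + \tfrac{n_0+n_2}{2}\Bigr).
\end{equation*}
For $\alpha \geq 1/3$, hypotheses with $\hat h(x_0) = 1$ are feasible, but the budget $R_{\mu_0}(\hat h) \leq \alpha$ caps their pair picks at $k^\star = \lfloor d(3\alpha-1)/\alpha \rfloor < d$, giving $\mathcal{E}_\alpha(\hat h) \geq (d-k^\star)(1+\gamma)/(2d) \geq (1-2\alpha)/(2\alpha)$, a positive $\alpha$-dependent constant that dominates $\sqrt{d/n}$ for $n$ not too small and is absorbed into the final constants.

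\textbf{Assouad and probability conversion.} Pick $\gamma = \sqrt{d/(8n)}$ so that $\mathrm{KL}(\mu_1^\sigma \,\|\, \mu_1^{\sigma^{(i)}}) \leq 4\gamma^2/d$ and Pinsker gives $\mathrm{TV}((\mu_1^\sigma)^{\otimes n}, (\mu_1^{\sigma^{(i)}})^{\otimes n}) \leq 1/2$. Define the induced estimator $\tilde\sigma_i(\hat h)\in\{\pm 1\}$ from $(a_i, b_i)$ (with any fixed default on $(0,0)/(1,1)$ pairs). Applying Assouad coordinate-by-coordinate and averaging over $\sigma$ uniform yields $\mathbb{E}_{\sigma,S}[n_w + (n_0+n_2)/2] \geq d(1-\mathrm{TV})/2 \geq d/4$. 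Since the random variable $n_w + (n_0+n_2)/2$ lies in $[0,d]$, a reverse-Markov step gives $\mathbb{P}_{\sigma,S}[n_w + (n_0+n_2)/2 \geq d/8] \geq 1/7$, and so some $\sigma^\star$ satisfies $\mathbb{P}_{\sigma^\star}[\mathcal{E}_\alpha(\hat h) \geq \gamma/8] \geq 1/7$. Intersecting with the event $\{\hat h \in \mathcal{H}_\alpha(\mu_0)\} = \{\hat h \in \mathcal{H}_{\alpha+\epsilon_0}(\mu_0)\}$, which has probability at least $1-\delta_0$, preserves a positive constant probability and establishes \eqref{low_ineq}. The main obstacles I foresee are (i) carefully verifying the grid/integer-rounding argument for $\mathcal{H}_\alpha = \mathcal{H}_{\alpha+\epsilon_0}$ when $\alpha \geq 1/3$, where the $h(x_0)=1$ jump interacts with the pair-grid, and (ii) showing that the sink-type hypotheses $\hat h(x_0)=1$ cannot trivialize the bound---both handled by the explicit $(1-2\alpha)/(2\alpha)$ lower bound computed above.
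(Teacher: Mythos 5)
Your proposal builds the same hard instance as the paper: a sink point $x_0$ carrying mass $1-2\alpha$ under $\mu_0$, a collection of shattered pairs $(x_i,x_i')$ with equal per-point $\mu_0$-mass chosen so that picking one point from each pair exactly saturates the budget $\alpha$, and a $\mu_1^\sigma$ family that perturbs the per-pair probabilities by $\pm\gamma$ with $\gamma\asymp\sqrt{d/n}$. The only real deviation from the paper is the off-the-shelf lower-bound engine: you run Assouad coordinate-wise, whereas the paper runs Tsybakov/Fano with a Gilbert--Varshamov packing for $d_\Hyp\ge 17$ and falls back to a two-point Le Cam argument for $d_\Hyp<17$. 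Your choice buys a cleaner unification -- Assouad applies uniformly down to a single pair $d=1$, so you avoid the paper's case split on the size of $d_\Hyp$ -- while the constants you get are of the same quality.

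Two small remarks on the proposal. First, your separate treatment of $\hat h(x_0)=1$ is actually unnecessary: the feasibility constraint $R_{\mu_0}(\hat h)\le\alpha$ forces $n_0\ge n_2$ \emph{regardless} of $\hat h(x_0)$ (when $\hat h(x_0)=1$ the budget is strictly tighter, so the slack is even larger), which means your display $\mathcal{E}_\alpha(\hat h)\ge\frac{\gamma}{d}\bigl(n_w+\tfrac{n_0+n_2}{2}\bigr)$ already holds uniformly and the Assouad count covers every feasible $\hat h$; the extra $(1-2\alpha)/(2\alpha)$ bound, whose strength degrades as $\alpha\to 1/2$, can be dropped. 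Second, the grid argument for $\Hyp_\alpha(\mu_0)=\Hyp_{\alpha+\epsilon_0}(\mu_0)$ is airtight on the $h(x_0)=0$ branch (spacing $\alpha/d>\epsilon_0$), but on the $h(x_0)=1$ branch the value $1-2\alpha$ can land in $(\alpha,\alpha+\epsilon_0]$ when $\alpha$ is just below $1/3$, so the ``short case check'' is not quite automatic; you correctly flag this as the remaining obstacle. The paper's proof does not spell out this corner case either (it also asserts the equality after constructing $\mu_0$), so this is a shared, fixable technicality -- e.g.\ by a small perturbation of $\mu_0(\{x_0\})$ -- rather than a gap specific to your argument.
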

Figure \ref{FIG_VC>16} illustrates the construction of the distributions $\mu_0,\mu_1$ used to establish the lower-bound of $\Omega (1/\sqrt{n})$ in Lemma \ref{4.3}. We pick order $d_{\mathcal{H}}$ points from $\mathcal{X}$ shattered by $\mathcal{H}$ and construct a distribution $\mu_0$ and a family of distributions $\mu_1$ on these points. The construction essentially randomizes the $\mu_1$ masses of sets $\braces{h =1}$, $h \in \Hyp_{\mu_0}(\alpha)$ and the learner has to figure out which has largest mass. 

\begin{figure}
\centering
\begin{subfigure}{.46\textwidth}
  \centering
  \includegraphics[height=.6\textwidth,width=.9\textwidth]{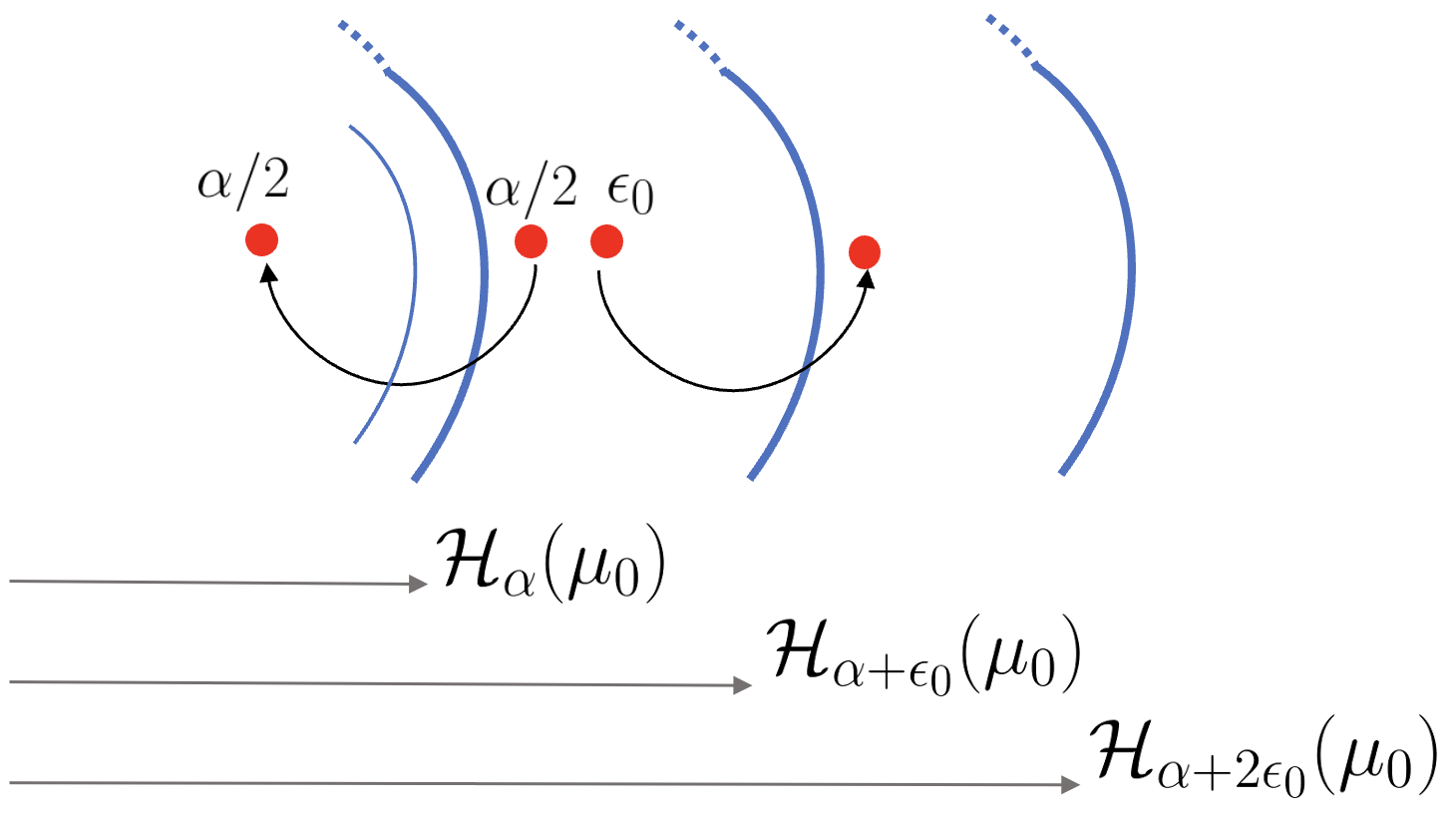}
  \caption{}
  \label{fig3:1}
\end{subfigure}
\hfill
\hspace{-60mm}
\begin{subfigure}{.46\textwidth}
  \centering
\includegraphics[height=.6\textwidth,width=.8\textwidth]{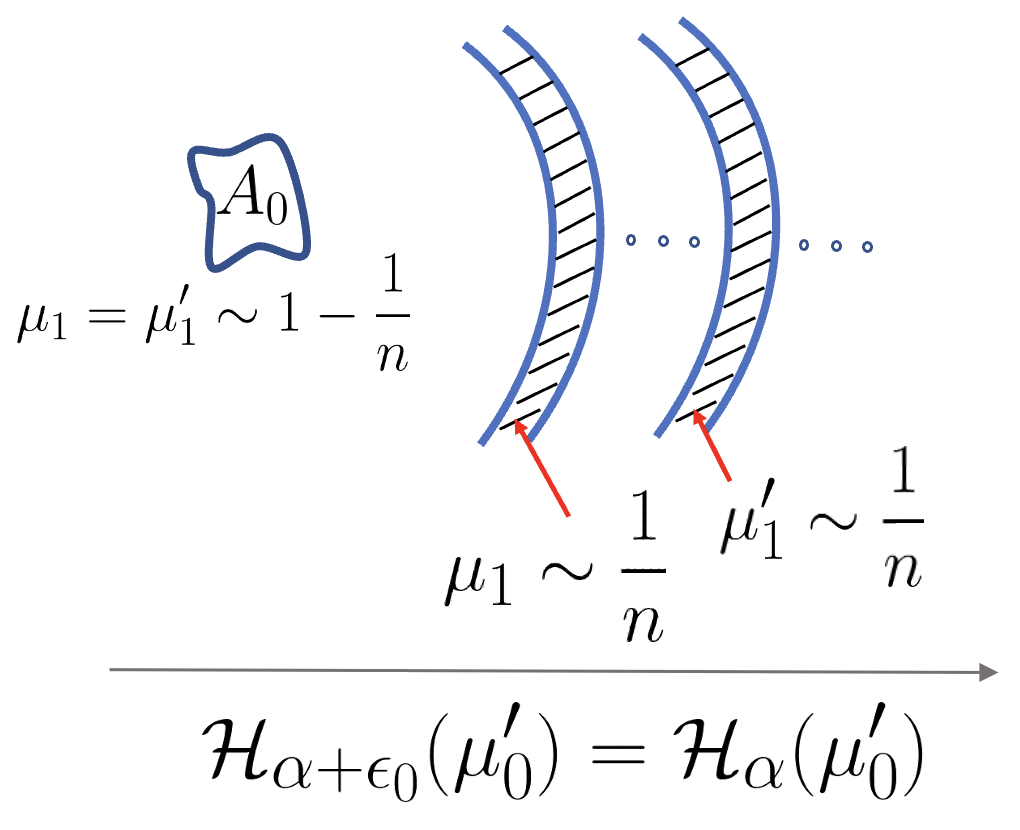}
  \caption{} 
  \label{fig3:2}
\end{subfigure}
\caption{Lower-bound constructions for cases (ii.a) of Theorems 
\ref{thm1} and \ref{thm2}. Theorem \ref{thm2} (ii.a), starts with a reduction: for any $\Hyp_{\alpha+\epsilon_0} (\mu_0)$ with no maximal element, we construct a new $\mu_0'$ such that $\Hyp_{\alpha+\epsilon_0} (\mu_0') = \Hyp_{\alpha} (\mu'_0)$ and also has no maximal element. Subfigure (a): the main idea is to move $\mu_0$-mass out of regions $\braces{h=1}$, $h \in \Hyp_{\alpha+\epsilon_0} (\mu_0) \setminus \Hyp_{\alpha} (\mu_0)$ carefully to ensure that $\Hyp_{\alpha+\epsilon_0} (\mu_0') = \Hyp_{\alpha+\epsilon_0} (\mu_0)$, i.e., without reducing the mass of regions outside $\Hyp_{\alpha + \epsilon_0}(\mu_0)$. These involves technical corner cases handled in Lemma \ref{lemma4.8}. Subfigure (b): A family of distributions $\mu_1$'s can then be defined over $\Hyp_{\alpha}(\mu_0')$, that all put the bulk of their mass on a set $A_0 \doteq \braces{h_0=1}$ for some $h_0 \in \Hyp_{\alpha}(\mu_0')$, but differ in where the put the remaining mass of order $n^{-1}$. The learner has to identify where the remaining mass resides. }
\label{fig3}
\end{figure}

\begin{lemma}[$\mathcal{H}$ does not separate three points]\label{lower}
    Let $\mathcal{H}$ be a class with VC dimension $d_{\mathcal{H}}$ that does not separate three points. Fix $0<\alpha<\frac{1}{2}$ and $\delta_0\geq 0$. Let $\hat{h}$ denote any randomized $(\alpha,\delta_0)$-learner with access to $S_{\mu_1}\sim \mu_1^{n}$. Assume that $n\geq 2$. Consider any $\mu_0$ such that $\mathcal{H}_{\alpha}(\mu_0)$ does not admit a maximal element. Then for any $\hat{h}$, there exist universal constants $c,c'$ such that
\begin{align}\label{low_ineq_lemma6}
\sup_{\mathcal{U}(\mu_0)}\mathbb{P}\bigg(\mathcal{E}_{\alpha}(\hat{h})>c\cdot\frac{d_{\mathcal{H}}}{n}
\bigg)\geq c'.
\end{align}
\end{lemma}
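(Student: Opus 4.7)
I would prove the lower bound via an adaptive two-point adversary: for any randomized $(\alpha,\delta_0)$-learner $\hat h$, construct a distribution $\mu_1 = \mu_1(\hat h)$ of the form $(1-1/n)\delta_y + (1/n)\delta_x$ that forces $\mathcal{E}_\alpha(\hat h) \ge 1/n$ with constant probability. Since $\mathcal{H}$ not separating three points forces $d_\mathcal{H} \le 2$, an $\Omega(1/n)$ bound is equivalent to the claimed $\Omega(d_\mathcal{H}/n)$ rate up to universal constants.

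\textbf{Chain structure.} First I would argue that under the non-separation hypothesis and $\alpha < 1/2$, the family $\mathcal{A} := \{\{h=1\}: h \in \mathcal{H}_\alpha(\mu_0)\}$ is totally ordered by inclusion: for any $h_1, h_2 \in \mathcal{H}_\alpha(\mu_0)$, the only alternative to comparability permitted by Definition \ref{def:three_points} is $\{h_1=1\} \cup \{h_2=1\} = \mathcal{X}$, which is ruled out because $\mu_0(\{h_1=1\} \cup \{h_2=1\}) \le 2\alpha < 1$. The no-maximal-element hypothesis then says that for every $B \in \mathcal{A}$ there is some $B' \in \mathcal{A}$ with $B \subsetneq B'$; in particular $\mathcal{A}$ contains an infinite strictly increasing chain, and no element of $\mathcal{A}$ equals the union $A^* := \bigcup_{B \in \mathcal{A}} B$.

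\textbf{The construction.} Pick any nonempty $A_0 \in \mathcal{A}$ (which exists, since otherwise $\emptyset$ would be a maximum) and any $y \in A_0$, and consider the ``silent'' sample $S_0 = (y, \ldots, y) \in \mathcal{X}^n$. For the randomized learner $\hat h$, let $T := \{\hat h(S_0) = 1\}$, so $\mathbb{P}(T \in \mathcal{A}) \ge 1 - \delta_0 \ge 1/2$ by the $(\alpha,\delta_0)$ property. I would then show that one can find a set $B \in \mathcal{A}$ with $\mathbb{P}(T \subset B) \ge 1/4$: this uses that on $\{T \in \mathcal{A}\}$ the realization $T(\omega)$ is itself an upper bound for $T(\omega)$ in $\mathcal{A}$, so a countable cofinal sub-chain in the (measurable) support of $T$ together with monotone convergence gives $\sup_{B \in \mathcal{A}} \mathbb{P}(T \subset B) = \mathbb{P}(T \in \mathcal{A})$. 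After replacing $B$ with the chain-maximum of $B$ and $A_0$ (still in $\mathcal{A}$), we may assume $A_0 \subset B$, hence $y \in B$. By no-max, pick $B' \in \mathcal{A}$ with $B' \supsetneq B$ and choose $x \in B' \setminus B$; then $y \in B \subset B'$ with $y \ne x$. Define $\mu_1 := (1 - 1/n) \delta_y + (1/n) \delta_x$.

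\textbf{Finishing the bound and main obstacle.} The infimum $\inf_{h \in \mathcal{H}_\alpha(\mu_0)} R_{\mu_1}(h) = 0$ is attained by the hypothesis whose $\{h=1\}$ is $B'$, which covers both $y$ and $x$. The sample $S \sim \mu_1^n$ equals $S_0$ with probability $(1-1/n)^n \ge e^{-2}$, independently of the learner's internal randomness. On the joint event $\{S = S_0\} \cap \{T \subset B\}$---of probability at least $e^{-2}/4$---we have $x \notin T$, so $R_{\mu_1}(\hat h) \ge \mu_1(\{x\}) = 1/n$ and therefore $\mathcal{E}_\alpha(\hat h) \ge 1/n$. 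Using $d_\mathcal{H} \le 2$, this yields $\mathbb{P}(\mathcal{E}_\alpha(\hat h) > c \cdot d_\mathcal{H}/n) \ge c'$ for universal $c, c' > 0$ (e.g.\ $c = 1/4$, $c' = e^{-2}/4$). The main obstacle is the existence-of-$B$ step: the no-max hypothesis is essential here---if $\mathcal{A}$ had a maximum, the learner could simply return it and achieve zero excess risk---and a measure-theoretic/separability argument on $T$ is what lets us pass from the pointwise observation $T(\omega) \in \mathcal{A} \Rightarrow T(\omega) \subset T(\omega)$ to a single uniform $B$ capturing $T$ with constant probability.
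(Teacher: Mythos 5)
Your proposal follows essentially the same construction as the paper: use Lemma~\ref{total} to show $\mathcal{A}=\{\{h=1\}:h\in\mathcal{H}_\alpha(\mu_0)\}$ is a chain, pick $y$ in some nonempty $A_0\in\mathcal{A}$, look at the learner's behavior on the constant sample $(y,\ldots,y)$, use no-max to find a point $x$ outside the learner's output, and then plant $\mu_1=(1-1/n)\delta_y+(1/n)\delta_x$ so that a silent sample occurs with probability $(1-1/n)^n$ and forces $\mathcal{E}_\alpha\ge 1/n$; the factor $d_\mathcal{H}/n$ follows from $d_\mathcal{H}\le 2$ under non-separation. So the core idea is the paper's.

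The genuine difference is your explicit handling of the learner's internal randomness. The paper sets $\hat h_0=\hat h(\{x_0\}^n)$ and then picks $x_1\notin A_0\cup\{\hat h_0=1\}$; since $\hat h_0$ is itself a random variable (in $\omega$, the learner's coin flips), this makes the adversary's $\mu_1$ depend on $\omega$, which is not allowed, and the paper does not resolve this. You correctly identify that one must exhibit a \emph{fixed} $B\in\mathcal{A}$ with $\mathbb{P}(T\subset B)$ bounded below by a constant, and only then choose $x\in B'\setminus B$. That is a real gap the paper glosses over and you fill it. The one place your argument is itself soft is the claim $\sup_{B\in\mathcal{A}}\mathbb{P}(T\subset B)=\mathbb{P}(T\in\mathcal{A})$: this is a monotone-convergence statement along the chain, and it needs the chain (restricted to the essential range of $T$) to have countable cofinality. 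You appeal to ``a countable cofinal sub-chain in the measurable support of $T$,'' which is plausible for the settings the paper cares about but is not justified; in full generality (uncountable chains of uncountable cofinality) the uncountable union $\bigcup_B\{T\subset B\}$ need not be approximated in measure by countably many of its members. Since the paper's own proof has the same (indeed worse) informality on this point, I would not call this a fatal flaw, but if you want full rigor this is exactly where the work remains: either establish a separability property of $\mathcal{A}$ under $T$'s law, or replace the step by a cleaner reduction (e.g., showing that a randomized learner can be dominated by a deterministic one for this specific one-point adversary).
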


The construction for this lemma is illustrated in Figure \ref{fig3} (b). The lemma immediately applies to Theorem \ref{thm1} (ii.a), while Theorem \ref{thm2} (ii.a) further requires constructing a suitable $\mu_0'$ (Figure \ref{fig3} (a)). 

\subsection{Proof of Theorem \ref{thm1}: Exact Level $\alpha$}\label{sec: proof:theorem 1}
  (i) {\bf $\mathcal{H}$ separates three points.}\\
$\bullet$ \emph{Upper bound.} By Lemma \ref{upp-sep_1}, for the hypothesis $\hat{h}$ returned by Algorithm \ref{algorithm:1:separating-three-points} we get  $\mathcal{E}_{\alpha}(\hat{h})\leq 4\sqrt{\frac{d_{\mathcal{H}}\log{2n}+\log{\frac{8}{\delta}}}{n}}$ with probability at least $1-\delta$. Therefore, if we choose $\delta=\frac{1}{n}$, for some numerical constants $C_1,C_2$ we obtain
\begin{align*}
    \Ex_{S_{\mu_1}} [\mathcal{E}_{\alpha}(\hat{h})]\leq C_1\sqrt{\frac{d_{\mathcal{H}}}{n}\log{n}}+\delta
    \leq C_2\sqrt{\frac{d_{\mathcal{H}}}{n}\log{n}}.
\end{align*}

$\bullet$ \emph{Lower bound.} Lemma \ref{4.3} provides a minimax lower bound for any randomized $(\alpha,\delta_0)$-learner with access to $S_{\mu_1}\sim \mu_1^n$. This implies that \eqref{low_ineq} remains valid if we consider exact $\alpha$-learners, according to Definition \ref{Distribution-free-learner}, as an exact $\alpha$-learner is also a randomized $(\alpha,\delta_0)$-learner. By using Markov's inequality we can deduce the minimax lower bound in Theorem \ref{thm1} (i) from \eqref{low_ineq}.

\noindent {\bf (ii) $\mathcal{H}$ does not separate three points.} \\
  $\bullet$ \emph{Upper bound.} By Lemma \ref{upp-not_sep_1}, 
  if $\mathcal{H}_{\alpha}(\mu_0)$ does not admit a maximal element, then the hypothesis $\hat{h}$ returned by Algorithm \ref{algorithm:not-separating-three-points}
  with probability at least $1-\delta$ satisfies $\mathcal{E}_{\alpha}(\hat{h})\leq C \frac{d_{\mathcal{H}}}{n}\log{n}$ for a numerical constant $C$. Therefore, if we choose $\delta=\frac{1}{n}$ we obtain
\begin{align*}
    \Ex_{S_{\mu_1}} [\mathcal{E}_{\alpha}(\hat{h})]\leq C\frac{d_{\mathcal{H}}}{n}\log{n}+\delta\leq C\frac{d_{\mathcal{H}}}{n}\log{n}.
\end{align*}
Furthermore for Theorem \ref{thm1} (ii.b), if $\mathcal{H}_{\alpha}(\mu_0)$ admits a maximal element, then by Lemma \ref{upp-not_sep_1} we get that $\hat{h}\in \mathcal{H}_{\alpha}(\mu_0)$ and $\mathcal{E}_{\alpha}(\hat{h})= 0$.\\
$\bullet$ \emph{Lower bound.}
  Direct application of
Lemma \ref{lower}  implies the lower bound in Theorem \ref{thm1} (ii.a).
\begin{flushright}{$\square$}
\end{flushright}

\subsection{Proof of Theorem \ref{thm2}: Approximate Level $\alpha$}\label{sec: Proof: theorem 2}
For part (ii.a) of Theorem \ref{thm2} we require the following additional lemma.
\begin{lemma}\label{lemma4.8}
    Suppose that there exists a distribution $\mu_0$ over the measurable space $(\mathcal{X},\Sigma)$ with finite support such that $\mathcal{H}_{\alpha+\epsilon_0}(\mu_0)$ does not
admit a maximal element. Then, there also exists a measure $\tilde{\mu}_0$ over the measurable space $(\mathcal{X},\Sigma)$ with finite support such that $\mathcal{H}_{\alpha}(\mu'_0)=\mathcal{H}_{\alpha+\epsilon_0}(\mu'_0)$ which does not admit a maximal element.
\end{lemma}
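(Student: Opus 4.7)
The plan is to construct $\mu'_0$ as a probability measure supported on at most two points chosen from
$Y \doteq \mathrm{supp}(\mu_0) \cap (\X \setminus B^*)$, where $B^* \doteq \bigcup_{h \in \Hyp_{\alpha+\epsilon_0}(\mu_0)} \braces{h=1}$.
A preliminary observation is that failure of three-points-separation combined with $\alpha + \epsilon_0 < 1/2$ forces $\braces{\braces{h=1}: h \in \Hyp_{\alpha+\epsilon_0}(\mu_0)}$ to be totally ordered by inclusion, because the ``cover-$\X$'' alternative from the three-points-separation trichotomy would require combined $\mu_0$-mass of at least $1$ while each such set has mass at most $\alpha+\epsilon_0 < 1/2$. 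Consequently $\mu_0(B^*) \leq \alpha+\epsilon_0 < 1$, so $Y$ is non-empty.

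Construction: if $|Y| = 1$ with $Y = \braces{y}$, take $\mu'_0 = \delta_{y}$; if $|Y| \geq 2$, take $\mu'_0 = \tfrac12\delta_{y}+\tfrac12\delta_{y'}$ for a pair $(y, y') \in Y^2$ chosen below. For every $h \in \Hyp_{\alpha+\epsilon_0}(\mu_0)$, the inclusion $\braces{h=1} \subseteq B^*$ gives $h = 0$ on $Y$ and therefore $R_{\mu'_0}(h) = 0 \leq \alpha$. For every $h \notin \Hyp_{\alpha+\epsilon_0}(\mu_0)$, applying three-points-separation failure to the pair $(h, h_0)$ with $h_0 \in \Hyp_{\alpha+\epsilon_0}(\mu_0)$ (and ruling out $\braces{h=1} \subseteq \braces{h_0=1}$ by $\mu_0$-mass comparison) yields one of two sub-cases: either some $h_0$ gives $\braces{h=1}\cup\braces{h_0=1}=\X$, forcing $h \equiv 1$ on $\X \setminus B^*$ and in particular $R_{\mu'_0}(h)=1$; or for every $h_0$ one has $\braces{h_0=1}\subseteq\braces{h=1}$ (the ``residual'' case), in which case the choice of $(y, y')$ will ensure $h(y) = 1$ or $h(y') = 1$, giving $R_{\mu'_0}(h) \geq 1/2$. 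Either way $R_{\mu'_0}(h) > \alpha+\epsilon_0$ because $\alpha+\epsilon_0 < 1/2$. The resulting chain of inclusions
\[
\Hyp_{\alpha+\epsilon_0}(\mu_0) \;\subseteq\; \Hyp_\alpha(\mu'_0) \;\subseteq\; \Hyp_{\alpha+\epsilon_0}(\mu'_0) \;\subseteq\; \Hyp_{\alpha+\epsilon_0}(\mu_0)
\]
then collapses to equality, and the no-maximal-element property transfers directly from the hypothesis on $\Hyp_{\alpha+\epsilon_0}(\mu_0)$.

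The main obstacle is selecting $(y, y') \in Y^2$ so that $\braces{y, y'}$ hits $\braces{h=1}$ for every residual hypothesis. For each residual $h$, since $R_{\mu_0}(h) > \alpha+\epsilon_0 \geq \mu_0(B^*)$ we have $\braces{h=1}\setminus B^* \neq \emptyset$, so $A_h \doteq \braces{h=1}\cap Y$ is non-empty. A second pairwise application of three-points-separation failure to two residual hypotheses yields that their restrictions to $Y$ either nest or cover $Y$, so the family $\braces{A_h^c : h \text{ residual}}$ of complements in $Y$ is \emph{laminar}; since $|Y|$ is finite, this family has finitely many maximal elements. A short case split then produces $(y, y')$: if there are at least two disjoint maximal elements $C_1, C_2$, pick $y \in C_1, y' \in C_2$; if there is a unique maximal $C = A_{h^*}^c$, pick $y \in C$ and $y' \in Y \setminus C = A_{h^*}$, which is non-empty by the above.
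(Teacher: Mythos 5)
Your proof is correct and takes a genuinely different route from the paper's. The paper performs targeted mass surgery on $\mu_0$: it moves $\epsilon_0/2$ of mass to a point $z$ (or $w$) so as to push the hypotheses in $\mathcal{H}_{\alpha+\epsilon_0}(\mu_0)\setminus\mathcal{H}_\alpha(\mu_0)$ above level $\alpha+\epsilon_0$ without perturbing those already inside or outside, and this requires a case split on whether $\tilde{\mathcal{H}}_{\alpha+\epsilon_0}(\mu_0) = \tilde{\mathcal{H}}_{\alpha+2\epsilon_0}(\mu_0)$ together with a WLOG step. Your construction discards $\mu_0$ entirely and places a fresh two-point measure on $\mathrm{supp}(\mu_0)\setminus B^*$, which forces a binary dichotomy: every $h\in\mathcal{H}_{\alpha+\epsilon_0}(\mu_0)$ gets $\mu'_0$-risk $0$, and (after your case analysis) every $h\notin\mathcal{H}_{\alpha+\epsilon_0}(\mu_0)$ gets $\mu'_0$-risk at least $1/2>\alpha+\epsilon_0$, so all three sets in your chain collapse to $\mathcal{H}_{\alpha+\epsilon_0}(\mu_0)$. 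This cleanly avoids the paper's corner cases; the only residual work is the laminar-family argument for choosing $(y,y')$, which again leverages the failure of three-points-separation. Two small remarks: (1) make explicit that this lemma is only invoked under the ambient non-separation hypothesis of Theorem~\ref{thm2}(ii)---that assumption is what gives Lemma~\ref{total}-style total ordering both here and in the paper's proof, and your argument uses it three times (inside $\mathcal{H}_{\alpha+\epsilon_0}(\mu_0)$, across the boundary, and among residual hypotheses); (2) note the degenerate sub-cases where there are no residual hypotheses or the laminar family is $\{\emptyset\}$ (all residual $A_h = Y$), in which any choice of $(y,y')$ works, so the case split you state is exhaustive once those are folded in.
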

{\bf(i) $\mathcal{H}$ separates three points.}\\
$\bullet$ \emph{Upper bound.} By Lemma \ref{upp-sep_1}, for the hypothesis $\hat{h}$ returned by Algorithm \ref{algorithm:1:separating-three-points}, first we get that $\hat{h}\in \mathcal{H}_{\alpha+\epsilon_0}(\mu_0)$ with probability at least $1-\delta_0$. Furthermore, if we choose $\delta=\frac{1}{n}$, \eqref{upper_bound_separate_lemma} implies that for some numerical constants $C_1,C_2$ we get
\begin{align*}
\Ex_{S_{\mu_0},S_{\mu_1}}\big[\mathcal{E}_{\alpha}(\hat{h})\cdot\mathbb{1}\big\{\hat{h}\in \mathcal{H}_{\alpha+\epsilon_0}(\mu_0)\big\}\big]\leq C_1\sqrt{\frac{d_{\mathcal{H}}}{n}\log{n}}
    +(\delta_0+\delta)\leq C_2\sqrt{\frac{d_{\mathcal{H}}}{n}\log{n}} +\delta_0,
\end{align*}
      $\bullet$ \emph{Lower bound.} Lemma \ref{4.3} implies that there exists a distribution $\mu_0$ with $\mathcal{H}_{\alpha}(\mu_0)=\mathcal{H}_{\alpha+\epsilon_0}(\mu_0)$ such that for any randomized $(\alpha,\delta_0)$-learner, \eqref{low_ineq} holds. Furthermore, when $\mathcal{H}_{\alpha}(\mu_0)=\mathcal{H}_{\alpha+\epsilon_0}(\mu_0)$, an $(\epsilon_0,\delta_0)$-approximate $\alpha$-learner is also a randomized $(\alpha,\delta_0)$-learner. Therefore, using Markov's inequality implies the lower bound in Theorem \ref{thm2} (i) from \eqref{low_ineq}.\\
{\bf(ii) $\mathcal{H}$ does not separate three points.}\\
$\bullet$ \emph{Upper bound.} By Lemma \ref{upp-not_sep_1}, for the hypothesis $\hat{h}$ returned by Algorithm \ref{algorithm:not-separating-three-points} we get that $\hat{h}\in \mathcal{H}_{\alpha+\epsilon_0}(\mu_0)$ with probability at least $1-\delta_0$. Furthermore, by choosing $\delta=\frac{1}{n}$ we obtain
      \begin{align*}
\Ex_{S_{\mu_0},S_{\mu_1}}[\mathcal{E}_{\alpha}(\hat{h})\cdot \indicator{\hat{h}\in \mathcal{H}_{\alpha+\epsilon_0}(\mu_0)}]\leq C_1\frac{d_{\mathcal{H}}}{n}\log{n}+(\delta_0+\delta)\leq C_2\frac{d_{\mathcal{H}}}{n}\log{n}+\delta_0,
      \end{align*}
where $C_1,C_2$ are numerical constants. Moreover, for Theorem \ref{thm2} (ii.b), if for all distributions $\mu_0$ over the measurable space $(\mathcal{X},\Sigma)$ with finite support, $\mathcal{H}_{\alpha+\epsilon_0}(\mu_0)$ admits a maximal element, then with probability at least $1-\delta_0$, we have $\hat{h}\in \mathcal{H}_{\alpha+\epsilon_0}(\mu_0)$ and $\mathcal{E}_{\alpha}(\hat{h})=0$, which implies \begin{align*}
        \inf_{\hat h}\ \mathbb{P}\bigg(\mathcal{E}_{\alpha}(\hat h)\cdot\mathbb{1}\big\{\hat{h}\in \mathcal{H}_{\alpha+\epsilon_0}(\mu_0)\big\}=0 \bigg)\ \geq\ 1-\delta_0.
     \end{align*}\\
      $\bullet$ \emph{Lower bound.} We aim to call on Lemma \ref{lower} which addresses randomized $(\alpha,\delta_0)$-learners which return a hypothesis from $\mathcal{H}_{\alpha}(\mu_0)$ rather than from $\mathcal{H}_{\alpha + \epsilon_0}(\mu_0)$. We therefore combine with Lemma \ref{lemma4.8}, to reveal a distribution $\mu'_0$ such that $\mathcal{H}_{\alpha}(\mu'_0)=\mathcal{H}_{\alpha+\epsilon_0}(\mu'_0)$ and satisfying the conditions of Lemma \ref{lower}. This establishes the lower-bound in Theorem \ref{thm2} (ii.a). 
     \begin{flushright}{$\square$}
\end{flushright}
\bibliographystyle{abbrvnat}
\bibliography{references}
\newpage

\appendix
\section{Proofs}
\subsection{Supporting Results}
We will make use of the following classical concentration results. Note that, the VC dimension of a collection of sets ${\cal B}\subset \Sigma$, is defined via the corresponding class of indicators over sets in $\cal B$. 
\begin{lemma}[Relative VC bounds \citep{vapnik2015uniform}]\label{relative_vc_lemma}
Let $\mathcal{B}$ be a collection of sets in $\Sigma$ with VC dimension $d_{\mathcal{B}}$ and $0<\delta<1$, and define $\epsilon_n=\frac{d_{\mathcal{H}}\log{2n}+\log{\frac{8}{\delta}}}{n}$. Then with probability at least $1-\delta$ w.r.t. $S_{\mu}\sim \mu^{n}$, for any set $B\in \mathcal{B}$ we have
$$\mu(B)\leq \hat{\mu}(B)+\sqrt{\hat{\mu}(B)\epsilon_n}+\epsilon_n \ \ \ \text{and}\ \ \ \hat{\mu}(B)\leq \mu(B)+\sqrt{\mu(B)\epsilon_n}+\epsilon_n$$

where $\hat{\mu}(B)\doteq \frac{1}{n} \sum_{X\in S_{\mu}}\indicator{X\in B}$.
\end{lemma}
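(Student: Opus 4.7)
The plan is to establish both inequalities via Vapnik's classical symmetrization technique combined with the Sauer--Shelah lemma and a Bernstein-type (variance-dependent) concentration inequality. The asymmetric relative form, with the data-dependent $\sqrt{\hat\mu(B)\epsilon_n}$ and $\sqrt{\mu(B)\epsilon_n}$ terms rather than a flat $\sqrt{\epsilon_n}$, is precisely what the variance-aware concentration buys us; extracting it requires solving a quadratic-in-$\sqrt{\mu(B)}$ inequality at the end.

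First, I would perform a \emph{ghost-sample symmetrization}: introduce an independent copy $S'_\mu\sim\mu^n$ with empirical measure $\hat\mu'$. Using a one-sided Paley--Zygmund / Chebyshev argument tailored to relative deviations, I would show that for any $u>0$,
\[
\mathbb{P}\!\left(\sup_{B\in\mathcal{B}}\frac{\mu(B)-\hat\mu(B)}{\sqrt{\mu(B)}+\sqrt{\epsilon_n}}>u\right)\;\le\; 2\,\mathbb{P}\!\left(\sup_{B\in\mathcal{B}}\frac{\hat\mu'(B)-\hat\mu(B)}{\sqrt{\hat\mu(B)+\hat\mu'(B)}+\sqrt{\epsilon_n}}>\tfrac{u}{2}\right),
\]
which replaces the unobservable $\mu(B)$ on the right by its observable empirical counterpart at the cost of a constant factor. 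Second, conditional on the multiset $Z\doteq S_\mu\cup S'_\mu$, I would apply a \emph{random-sign symmetrization}: by exchangeability of paired coordinates, $\hat\mu'(B)-\hat\mu(B)$ is distributionally equal to $\tfrac{1}{n}\sum_{i=1}^n \epsilon_i\bigl(\mathbb{1}\{X'_i\in B\}-\mathbb{1}\{X_i\in B\}\bigr)$ with i.i.d.\ Rademacher $\epsilon_i$ independent of $Z$.

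Third, conditional on $Z$, Sauer--Shelah restricts the collection $\mathcal{B}$ to at most $(2n)^{d_{\mathcal{B}}}$ distinct subsets of $Z$. For each fixed restriction I would invoke a Bernstein inequality on the mean-zero, bounded Rademacher sum, whose conditional variance is controlled by $\hat\mu(B)+\hat\mu'(B)$, giving
\[
\mathbb{P}\!\left(\tfrac{1}{n}\sum_{i}\epsilon_i\bigl(\mathbb{1}\{X'_i\in B\}-\mathbb{1}\{X_i\in B\}\bigr)>t\ \Big|\ Z\right)\;\le\; \exp\!\left(-\frac{n t^2}{2\bigl(\hat\mu(B)+\hat\mu'(B)\bigr)+\tfrac{2}{3}t}\right).
\]
A union bound over the $(2n)^{d_{\mathcal{B}}}$ restrictions followed by taking $t=t(\epsilon_n)$ such that the right side is $\le \delta$ yields a relative bound in terms of $\hat\mu(B)+\hat\mu'(B)$. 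Fourth, I would invert: substituting back and solving the quadratic $\mu(B)-\hat\mu(B)\le c\sqrt{\mu(B)\epsilon_n}+c'\epsilon_n$ (resp.\ its $\hat\mu$-side mirror) for $\sqrt{\mu(B)}$ and completing the square yields the stated forms $\mu(B)\le \hat\mu(B)+\sqrt{\hat\mu(B)\epsilon_n}+\epsilon_n$ and $\hat\mu(B)\le\mu(B)+\sqrt{\mu(B)\epsilon_n}+\epsilon_n$.

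The main obstacle is the first step: unlike in the standard (non-relative) symmetrization where a single Chebyshev inequality suffices, the relative version requires a sharper tail comparison that still produces a constant-probability gap even when the deviation scale is allowed to shrink with $\mu(B)$. The cleanest route is to restrict attention to the event $\{\mu(B)\ge C\epsilon_n\}$ (treating the complementary regime directly via a Chernoff bound, where $\hat\mu(B)$ is automatically $O(\epsilon_n)$ and the conclusion is trivial), and on that event to use a one-sided Paley--Zygmund bound exploiting that $\mathrm{Var}(\hat\mu'(B))=\mu(B)(1-\mu(B))/n\lesssim \mu(B)\epsilon_n/d_{\mathcal{B}}$. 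Steps 2--4 are then routine calculations.
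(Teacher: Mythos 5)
The paper does not prove this lemma: it is stated as a citation to Vapnik's classical relative-deviation bound \citep{vapnik2015uniform}, and the appendix gives no derivation. So there is no in-paper proof to compare against. Your sketch reproduces the standard Vapnik route that the citation points to — ghost-sample symmetrization, random-sign symmetrization conditional on the doubled sample, Sauer--Shelah to reduce to $(2n)^{d_{\mathcal B}}$ restrictions, a Bernstein/Bennett tail controlling the conditional variance by $\hat\mu(B)+\hat\mu'(B)$, a union bound, and finally a quadratic inversion in $\sqrt{\mu(B)}$ (resp.\ $\sqrt{\hat\mu(B)}$) to pass from the ratio form $\mu(B)-\hat\mu(B)\lesssim\sqrt{\mu(B)\epsilon_n}$ to the stated additive form $\mu(B)\le\hat\mu(B)+\sqrt{\hat\mu(B)\epsilon_n}+\epsilon_n$. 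Your Bernstein computation in step 3 is correct (the conditional variance of the $n^{-1}\sum_i\epsilon_i(\mathbb 1\{X'_i\in B\}-\mathbb 1\{X_i\in B\})$ sum is at most $n^{-1}(\hat\mu(B)+\hat\mu'(B))$, each term bounded by $1/n$), and you correctly flag the relative-deviation symmetrization in step 1 as the genuinely delicate point: the constant-probability comparison of $\hat\mu'(B^*)$ to $\mu(B^*)$ via Chebyshev only gives a useful margin when the target deviation $\tfrac{u}{2}(\sqrt{\mu(B^*)}+\sqrt{\epsilon_n})$ dominates the standard deviation $\sqrt{\mu(B^*)/n}$, and your proposal to split on $\mu(B)\gtrless C\epsilon_n$ and handle the small-measure regime by a direct Chernoff bound is the right fix. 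The sketch is sound in outline; the remaining work is the (careful but routine) bookkeeping of constants to land exactly on the $\epsilon_n=\big(d_{\mathcal B}\log 2n+\log\tfrac{8}{\delta}\big)/n$ normalization used in the statement. One minor remark unrelated to your argument: the lemma statement in the paper writes $d_{\mathcal H}$ inside $\epsilon_n$ where $d_{\mathcal B}$ is evidently meant.
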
 

The following simple lemma establishes that $\Hyp$ must be highly structured if it does not satisfy three-points-separation. 

\begin{lemma}\label{total}
    Suppose that $\mathcal{H}$ does not separate three points, and let $\alpha<\frac{1}{2}$. Then, for any $\mu_0$, either $\mathcal{H}_{\alpha}(\mu_0)$ contains at most one element, or $\mathcal{H}_{\alpha}(\mu_0)$ is totally ordered by inclusion, i.e., for any $h,h'\in \mathcal{H}_{\alpha}(\mu_0)$ one of $\{h=1\}$ or $\{h'=1\}$ contains the other.
\end{lemma}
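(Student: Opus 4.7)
The proof plan is a short case analysis, directly unwinding the failure of three-points-separation on pairs of hypotheses in $\mathcal{H}_\alpha(\mu_0)$.

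First I would take any two hypotheses $h, h' \in \mathcal{H}_\alpha(\mu_0)$ (assuming $|\mathcal{H}_\alpha(\mu_0)| \geq 2$, else there is nothing to prove) and set $A \doteq \{h = 1\}$, $A' \doteq \{h' = 1\}$. Because $\mathcal{H}$ does not separate three points, the pair $h, h'$ in particular fails the condition of Definition \ref{def:three_points}. Reading the alternative (set-theoretic) formulation contrapositively, this means that
$$ A \cup A' = \mathcal{X} \quad \text{or} \quad A \subseteq A' \quad \text{or} \quad A' \subseteq A. $$

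The main (and essentially only) step is then to rule out the first disjunct using $\alpha < 1/2$. If $A \cup A' = \mathcal{X}$, then
$$ \mu_0(A) + \mu_0(A') \geq \mu_0(A \cup A') = 1, $$
so at least one of $\mu_0(A), \mu_0(A')$ is $\geq 1/2$. But $h, h' \in \mathcal{H}_\alpha(\mu_0)$ means $\mu_0(A), \mu_0(A') \leq \alpha < 1/2$, a contradiction. Hence $A \subseteq A'$ or $A' \subseteq A$, i.e., one of $\{h=1\}, \{h'=1\}$ contains the other. Since $h, h'$ were arbitrary, $\mathcal{H}_\alpha(\mu_0)$ is totally ordered by inclusion.

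There is no real obstacle here; the proof is a one-line application of the pigeonhole-style inequality $\mu_0(A)+\mu_0(A') \geq 1$, combined with the contrapositive unpacking of three-points-separation. The only subtlety worth a sentence is that the contrapositive of ``$(a)$ and $(b)$'' in Definition \ref{def:three_points} is exactly the disjunction ``$A \cup A' = \mathcal{X}$, or $A \subseteq A'$, or $A' \subseteq A$,'' which is what drives the case analysis.
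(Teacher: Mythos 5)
Your proof is correct and is essentially the paper's own argument: fix two hypotheses in $\mathcal{H}_\alpha(\mu_0)$, observe that failure of three-points-separation for this pair yields the disjunction ``$A\cup A'=\mathcal{X}$, or one set contains the other,'' and rule out the first disjunct via $\mu_0(A\cup A')\leq \mu_0(A)+\mu_0(A')\leq 2\alpha<1$. The only cosmetic difference is that the paper phrases the key step as ``condition (a) must hold because $\mu_0(\{h=1\}\cup\{h'=1\})\leq 2\alpha<1$'' rather than as a contradiction, but the content is identical.
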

\begin{proof}
Consider any two distinct hypotheses $h,h'\in\mathcal{H}_{\alpha}(\mu_0) \subset \Hyp$; since $\Hyp$ does not separate two points, we know that either (a) or (b) in Definition \ref{def:three_points} does not hold. However, (a) must hold since: 
$$\mu_0(\braces{h = 1}\cup \braces{h'=1}) \leq 2\alpha < 1,$$
i.e., $\braces{h = 1}\cup \braces{h'=1}\neq \X$. It follows that 
one of $\braces{h = 1}$ and $\braces{h' = 1}$ contains the other. 
\end{proof}
When dealing with totally ordered sets, we will need to establish concentration results on the differences of sets. The following characterizes their VC dimension. 
\begin{lemma}\label{lemma_new_class}
Let $\mathcal{B}$ be a subset of $\Sigma$ with VC dimension $d_{\mathcal{B}}$. Then the collection of differences of sets in $\mathcal{B}$, denoted as  $\tilde{\mathcal{B}}=\{B_1\setminus B_2: B_1,B_2\in \mathcal{B}\}$, has VC dimension of at most $12d_{\mathcal{B}}$.
\end{lemma}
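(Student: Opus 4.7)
The plan is to show this via the standard Sauer--Shelah counting argument applied to intersections of a VC class with its complement class.

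First, I would rewrite $B_1\setminus B_2 = B_1\cap B_2^c$, and observe that the complement class $\mathcal{B}^c\doteq\{B^c: B\in\mathcal{B}\}$ has the same VC dimension $d_{\mathcal{B}}$, since any labeling realized by $\mathcal{B}$ on $n$ points is realized with opposite labels by $\mathcal{B}^c$. Hence $\tilde{\mathcal{B}}$ is a subclass of the pairwise intersection class $\mathcal{B}\sqcap\mathcal{B}^c \doteq \{A_1\cap A_2: A_1\in\mathcal{B},\; A_2\in\mathcal{B}^c\}$, and it suffices to bound the VC dimension of this larger class.

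Next, fix any $n$ points $x_1,\dots,x_n\in\mathcal{X}$ and consider the number of distinct subsets of $\{x_1,\dots,x_n\}$ realized by $\tilde{\mathcal{B}}$ (the growth function at $n$). By Sauer--Shelah, the number of distinct restrictions to these $n$ points is at most $(en/d_{\mathcal{B}})^{d_{\mathcal{B}}}$ for both $\mathcal{B}$ and $\mathcal{B}^c$ (valid for $n \geq d_{\mathcal{B}}$). Since each set in $\mathcal{B}\sqcap\mathcal{B}^c$ is determined on $\{x_1,\dots,x_n\}$ by the pair of restrictions of some $A_1\in\mathcal{B}$ and $A_2\in\mathcal{B}^c$, the number of distinct restrictions from $\tilde{\mathcal{B}}$ is at most
\[
\left(\tfrac{en}{d_{\mathcal{B}}}\right)^{d_{\mathcal{B}}}\cdot\left(\tfrac{en}{d_{\mathcal{B}}}\right)^{d_{\mathcal{B}}} = \left(\tfrac{en}{d_{\mathcal{B}}}\right)^{2d_{\mathcal{B}}}.
\]

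Finally, if $n$ points are shattered by $\tilde{\mathcal{B}}$, this growth function must be at least $2^n$, giving the inequality $n \leq 2 d_{\mathcal{B}}\log_2(en/d_{\mathcal{B}})$. Setting $n = 12 d_{\mathcal{B}}$ forces $12 \leq 2\log_2(12 e) \approx 10.06$, a contradiction. Therefore no set of $12 d_{\mathcal{B}}$ points can be shattered, and the VC dimension of $\tilde{\mathcal{B}}$ is at most $12 d_{\mathcal{B}}$. The main (mild) obstacle is just the numerical check that the $\log_2(en/d)$ factor is dominated by the linear factor at $n = 12 d_{\mathcal{B}}$; all other steps are classical.
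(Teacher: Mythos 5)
Your proof is correct and takes essentially the same route as the paper: rewrite $B_1\setminus B_2 = B_1\cap B_2^c$, note that complementation preserves VC dimension, and then bound the VC dimension of the pairwise-intersection class $\{A_1\cap A_2 : A_1\in\mathcal{B},\, A_2\in\bar{\mathcal{B}}\}$. The only difference is that you carry out the Sauer--Shelah growth-function calculation yourself, whereas the paper simply cites a textbook exercise (Mohri et al., Exercise 3.15) whose standard solution is exactly this counting argument.
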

\begin{proof}
    First we show that under the complement operator VC dimension remains the same. This means that VC dimension of $\bar{\mathcal{B}}=\big\{B^C:B\in \mathcal{B}\big\}$ is equal to $d_{\mathcal{B}}$. Let $S$ be a set shattered by $\mathcal{B}$ and $\tilde{S}$ be an arbitrary subset of $S$. Since $S$ is shattered by $\mathcal{B}$, there exists $B\in \mathcal{B}$ such that $B\cap S=S\setminus \tilde{S}$. Then, we have $B^C\cap S=\tilde{S}$, which implies that $S$ is also shattered by $\bar{\mathcal{B}}$. Using the same argument, we can get that every set that is shattered by $\bar{\mathcal{B}}$ is also shattered by $\mathcal{B}$. Hence the VC dimension of $\bar{\mathcal{B}}$ is equal to $d_{\mathcal{B}}$.
    
    Moreover, since $\tilde{\mathcal{B}}=\big \{B_1\cap B_2^C: B_1,B_2\in \mathcal{B}\big\}=\big\{B_1\cap B_2:B_1\in \mathcal{B}, B_2\in \bar{\mathcal{B}}\big\}$, by \cite[][Chapter 3, Exercises 3.15]{mohri2018foundations} we obtain that the VC dimension of $\tilde{\mathcal{B}}$ is at most $12d_{\mathcal{H}}$.
\end{proof}
\subsection{Proofs of Section \ref{sec: Generic Upper-Bounds} (Upper-Bounds)}
\begin{proof}[Proof of Lemma \ref{upp-sep_1} (Upper-Bounds when $\mathcal{H}$ separates three points)]
The event $\{\underset{h\in \mathcal{H}}{\sup}|R_{\mu_0}(h)-\hat{R}_{\mu_0}(h)|\leq \frac{\epsilon_0}{2}\}$, by Definition \ref{sample}, happens with probability
at least $1-\delta_0$. This event immediately implies that $\hat h \in \mathcal{H}_{\alpha+\epsilon_0}(\mu_0)$, and also that $\mathcal{H}_{\alpha}(\mu_0)\subset \mathcal{H}_{\alpha+\epsilon_0/2}(\hat{\mu}_0)$.

To show \eqref{upper_bound_separate_lemma}, now condition on this last event that $\mathcal{H}_{\alpha}(\mu_0)\subset \mathcal{H}_{\alpha+\epsilon_0/2}(\hat{\mu}_0)$ . Now, by Lemma \ref{relative_vc_lemma} we also have that, with probability at least $1-\delta$, 
$\underset{h \in \Hyp}{\sup} |R_{\mu_1}(h)-\hat{R}_{\mu_1}(h)| \leq 2\sqrt{\epsilon}_n$. Under these two events, we have that, for all $h \in \Hyp_\alpha(\mu_0)$, 
\begin{align*}
{R}_{\mu_1}(\hat{h})\leq \hat{R}_{\mu_1}(\hat h) + 2\sqrt{\epsilon_n}\leq \hat{R}_{\mu_1}(h) + 2\sqrt{\epsilon}_n \leq {R}_{\mu_1}(h) + 4\sqrt{\epsilon}_n. 
\end{align*}
In other words, we have with probability at least $1-\delta_0 - \delta$ that 
\begin{align*} 
\mathcal{E}_{\alpha}(\hat{h})\cdot \indicator{\hat{h}\in \mathcal{H}_{\alpha+\epsilon_0}(\mu_0)} 
\leq \mathcal{E}_{\alpha}(\hat{h}) \leq 4\sqrt{\epsilon}_n.
\end{align*}
\end{proof}
\begin{proof}[Proof of Lemma \ref{upp-not_sep_1} (Upper-Bounds when $\mathcal{H}$ does not separate three points)]

\textbf{Part (a):} By Definition \ref{sample}, the event $\{\underset{h\in \mathcal{H}}{\sup}|R_{\mu_0}(h)-\hat{R}_{\mu_0}(h)|\leq \frac{\epsilon_0}{2}\}$ happens with probability
at least $1-\delta_0$, which implies $\hat h \in \mathcal{H}_{\alpha+\epsilon_0}(\mu_0)$ and $\mathcal{H}_{\alpha}(\mu_0)\subset \mathcal{H}_{\alpha+\epsilon_0/2}(\hat{\mu}_0)$. In what follows, we condition on this last event that $\mathcal{H}_{\alpha}(\mu_0)\subset \mathcal{H}_{\alpha+\epsilon_0/2}(\hat{\mu}_0)$.

By Lemma \ref{total}, $\mathcal{H}_{\alpha+\epsilon_0/2}(\hat{\mu}_0)$ is well structured, and we only have to consider three cases. If $\mathcal{H}_{\alpha+\epsilon_0/2}(\hat{\mu}_0)$ contains only a single element, then $\mathcal{H}_{\alpha}(\mu_0)$ must have a single element so that we have $\mathcal{E}_{\alpha}(\hat{h})=0$.

If $\mathcal{H}_{\alpha+\epsilon_0/2}(\hat{\mu}_0)$ contains at least two elements, then it would be totally ordered by Lemma \ref{total}. Moreover, if $\mathcal{H}_{\alpha+\epsilon_0/2}(\hat{\mu}_0)$ admits a maximal element, we get $R_{\mu_1}(\hat{h})\leq R_{\mu_1}(h)$ for any $h\in\mathcal{H}_{\alpha}(\mu_0)$, which implies $\mathcal{E}_{\alpha}(\hat{h})=0$.
Next, consider the case where $\mathcal{H}_{\alpha+\epsilon_0/2}(\hat{\mu}_0)$ does not admit a maximal element. By Lemma \ref{total}, for any $h\in \mathcal{H}_{\alpha}(\mu_0)$, one of $\{\hat{h}=1\}$ or $\{h=1\}$ contains the other. We only need to consider the case $\{\hat{h}=1\}\subset \{h=1\}$ as the other case is trivial. By using Lemma \ref{lemma_new_class} and conditioning on the event in Lemma \ref{relative_vc_lemma} w.r.t. $S_{\mu_1}$ for the class of differences of sets in $\mathcal{H}$, we obtain for a numerical constant $C$
\begin{align*}
    R_{\mu_1}(\hat{h})-R_{\mu_1}(h)=\mu_1(\{h=1\}\setminus\{\hat{h}=1\})\leq C\cdot \epsilon_n
\end{align*}
since $\hat{\mu}_1(\{h=1\}\setminus\{\hat{h}=1\})=\hat{R}_{\mu_1}(\hat{h})-\hat{R}_{\mu_1}(h)=0$. In other words, we have with probability at least $1-\delta_0 - \delta$ that 
\begin{align*} 
\mathcal{E}_{\alpha}(\hat{h})\cdot \indicator{\hat{h}\in \mathcal{H}_{\alpha+\epsilon_0}(\mu_0)} 
\leq \mathcal{E}_{\alpha}(\hat{h}) \leq C\cdot\epsilon_n.
\end{align*}

\textbf{Part (b):} We conisder the case where $\mu_0$ is unkonw as the other case is trivial. By the assumption, we know that $\mathcal{H}_{\alpha+\epsilon_0}(\hat{\mu}_0)$ admits a maximal element. It suffices to show that $\mathcal{H}_{\alpha+\epsilon_0/2}(\hat{\mu}_0)$ also admits a maximal element. By contradiction, suppose that $\mathcal{H}_{\alpha+\epsilon_0/2}(\hat{\mu}_0)$ does not admit a maximal element. Then, we construct a new measure $\mu'_0$ with finite support such that $\mathcal{H}_{\alpha+\epsilon_0/2}(\hat{\mu}_0)=\mathcal{H}_{\alpha+\epsilon_0}(\mu'_0)$, which gives a contradiction as $\mathcal{H}_{\alpha+\epsilon_0}(\mu'_0)$ admits a maximal element by the assumption.

We condition on the event $\mathcal{H}_{\alpha}(\mu_0)\subset \hat{\mathcal{H}}_{\alpha+\epsilon_0/2}(\mu_0)$ and construct $\mu'_0$ as follows. Take a point $z$ such that there exists an $h\in \hat{\mathcal{H}}_{\alpha+\epsilon_0/2}(\mu_0)$ with $z\in \{h=1\}$ (note that if there is no $h\in\mathcal{H}_{\alpha+\epsilon_0/2}(\hat{\mu}_0)$ with $\{h=1\}\neq \emptyset$, then $\mathcal{H}_{\alpha+\epsilon_0/2}(\hat{\mu}_0)$ would have a maximal element). Then define the mass of the point $z$ as $\mu'_{0}(\{z\})=\hat{\mu}_0(\{z\})+\frac{\epsilon_0}{2}$. Moreover, let $h^*$ be the maximal element of $\hat{\mathcal{H}}_{\alpha+\epsilon_0}(\mu_0)$ and define
$$A=\text{supp}(\hat{\mu}_0) \cap \{h^*=1\}^C\ \  \text{and} \ \ \ B=\text{supp}(\hat{\mu}_0) \cap \{h^*=1\}\setminus\{z\}.$$
Note that since  $\hat{\mu}_0(\{h^*=1\}^C)>1-\alpha-\epsilon_0\geq \frac{1}{2}$, we get  $\hat{\mu}_0(A)\geq \frac{1}{2}$. Then define the mass of $\mu'_{0}$ on $A$ and $B$ as $$\mu'_{0}(A)=\hat{\mu}_0(A)-\frac{\epsilon_0}{2}\ \  \text{and}\ \  \mu'_{0}(B)=\hat{\mu}_0(B).$$ Therefore, if $z\in \text{supp}(\hat{\mu}_0)$, then $\text{supp}(\mu'_0)=\text{supp}(\hat{\mu}_0)$, otherwise $\text{supp}(\mu'_0)=\text{supp}(\hat{\mu}_0) \cup \{z\}.$

Next, we show that $\mathcal{H}_{\alpha+\epsilon_0/2}(\hat{\mu}_0)=\mathcal{H}_{\alpha+\epsilon_0}(\mu'_0).$ First, let $h\in\mathcal{H}_{\alpha+\epsilon_0/2}(\hat{\mu}_0)$. Since $\{h=1\}\subset \{h^*=1\}$, we get $$R_{\mu'_0}(h)\leq \hat{R}_{\mu_0}(h)+\frac{\epsilon_0}{2}\leq \alpha+\epsilon_0,$$ which implies that  $\mathcal{H}_{\alpha+\epsilon_0/2}(\hat{\mu}_0)\subset\mathcal{H}_{\alpha+\epsilon_0}(\mu'_0)$. Next, in order to show that $\mathcal{H}_{\alpha+\epsilon_0}(\mu'_0)\subset \mathcal{H}_{\alpha+\epsilon_0/2}(\hat{\mu}_0)$ we show the following statements: (I) $\mathcal{H}_{\alpha+\epsilon_0}(\mu'_0)\subset \mathcal{H}_{\alpha+\epsilon_0}(\hat{\mu}_0)$, and 
    (II) If $h\in \mathcal{H}_{\alpha+\epsilon_0}(\hat{\mu}_0) \setminus\mathcal{H}_{\alpha+\epsilon_0/2}(\hat{\mu}_0)$, then $h \not\in \mathcal{H}_{\alpha+\epsilon_0}(\mu'_0)$.
    
    To show (I), we only need to consider an $h$ such that $h\in \mathcal{H}_{\alpha+\epsilon_0}(\mu'_0)\setminus \mathcal{H}_{\alpha+\epsilon_0/2}(\hat{\mu}_0)$, and then show $h \in \mathcal{H}_{\alpha+\epsilon_0}(\hat{\mu}_0)$. Since $\mathcal{H}_{\alpha+\epsilon_0/2}(\hat{\mu}_0)\subset\mathcal{H}_{\alpha+\epsilon_0}(\mu'_0)$, for all $h'\in \mathcal{H}_{\alpha+\epsilon_0/2}(\hat{\mu}_0)$, we have $\{h'=1\}\subset \{h=1\}$. Therefore, $z\in \{h=1\}$ and $$\hat{R}_{\mu_0}(h)\leq R_{\mu'_0}(h)-\frac{\epsilon_0}{2}+\frac{\epsilon_0}{2}\leq \alpha+\epsilon_0,$$ which implies that $h\in \mathcal{H}_{\alpha+\epsilon_0}(\hat{\mu}_0)$.
    
    To show (II), let $h\in \mathcal{H}_{\alpha+\epsilon_0}(\hat{\mu}_0) \setminus\mathcal{H}_{\alpha+\epsilon_0/2}(\hat{\mu}_0)$. Using the same argument, we get $z\in \{h=1\}$. Furthermore, since $\{h=1\}\subset \{h^*=1\}$, we have $\{h=1\}\cap A=\emptyset$ . Hence,
$$R_{\mu'_0}(h)= \hat{R}_{\mu_0}(h)+\frac{\epsilon_0}{2}>\alpha+\epsilon_0,$$ which implies that $h\not\in \mathcal{H}_{\alpha+\epsilon_0}(\hat{\mu}'_0)$.
\end{proof}
\subsection{Proofs of Section \ref{sec: Minimax Lower-Bounds} (Lower-Bounds)}
\begin{proof}[Proof of Lemma \ref{4.3} (Lower-Bounds when $\mathcal{H}$ separates three points)] The lower bound relies on Tysbakov's method \citep{Tsybakov:1315296}.

\begin{proposition}
\citep{Tsybakov:1315296}\label{tsybakov}
Assume that $M\geq 2$ and the function $\text{dist}(\cdot, \cdot)$ is a semi-metric. Furthermore, suppose that $\{\Pi_{\theta_j}\}_{\theta_j\in \Theta}$ is a family of distributions indexed over a parameter space, $\Theta$,  and $\Theta$ contains elements $\theta_0, \theta_1,..., \theta_M$ such that:
\begin{enumerate}
    \item [(i)] $\text{dist}(\theta_i,\theta_j)\geq 2s>0, \ \ \forall \ 0\leq i<j\leq M$
    
    \item [(ii)] $\Pi_j\ll \Pi_0, \ \ \forall \ j=1,...,M,$ and $\frac{1}{M}\sum_{j=1}^{M} \mathcal{D}_{kl}(\Pi_j|\Pi_0)\leq \gamma \log{M}$ with $0<\gamma<1/8$ and $\Pi_j=\Pi_{\theta_j}$, $j=0,1,...,M$ and $\mathcal{D}_{kl}$ denotes the KL-divergence.
    \end{enumerate}
    Then
    \begin{align*}
\inf_{\hat{\theta}}\sup_{\theta\in \Theta}\Pi_{\theta}&(\text{dist}(\hat{\theta},\theta)\geq s)\geq \frac{\sqrt{M}}{1+\sqrt{M}}\big(1-2\gamma-\sqrt{\frac{2\gamma}{\log{M}}} \big).
    \end{align*}  
\end{proposition}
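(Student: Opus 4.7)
The plan is to follow the classical Tsybakov route: reduce minimax estimation to $(M+1)$-ary hypothesis testing via a minimum-distance selector, and then lower-bound the testing error by a sharp likelihood-ratio calculation that plugs the averaged KL assumption (ii) into Markov's inequality. The key point is that condition (i) alone carries the estimation-to-testing reduction, while (ii) alone produces the explicit $(1-2\gamma-\sqrt{2\gamma/\log M})$ factor.

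First, I would perform the \emph{reduction to testing}. For any estimator $\hat\theta$, define the minimum-distance selector $\psi^\star:=\arg\min_{j\in\{0,\ldots,M\}}\text{dist}(\hat\theta,\theta_j)$. Since $\text{dist}$ is a semi-metric, applying the triangle inequality to (i) shows that if $\psi^\star=j$ while the truth is $\theta_i$ with $i\neq j$, then $\text{dist}(\hat\theta,\theta_i)\geq 2s-\text{dist}(\hat\theta,\theta_j)\geq s$ using $\text{dist}(\hat\theta,\theta_j)\leq\text{dist}(\hat\theta,\theta_i)$ from the definition of $\psi^\star$. Hence $\{\psi^\star\neq i\}\subseteq\{\text{dist}(\hat\theta,\theta_i)\geq s\}$ and
\[
\sup_{\theta\in\Theta}\Pi_\theta\!\left(\text{dist}(\hat\theta,\theta)\geq s\right)\ \geq\ \max_{0\leq j\leq M}\Pi_j(\psi^\star\neq j)\ \geq\ p_{e,M}^\star\ :=\ \inf_\psi\max_{0\leq j\leq M}\Pi_j(\psi\neq j).
\]
It remains to lower-bound $p_{e,M}^\star$ using only (ii), i.e., a purely measure-theoretic testing problem.

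Second, I would bound $p_{e,M}^\star$ by a \emph{likelihood-ratio method}. Let $Z_j:=d\Pi_j/d\Pi_0$ and fix a threshold $\tau>0$. For any test $\psi$ with cells $A_j=\{\psi=j\}$, splitting $\Pi_j(A_j)=\int_{A_j}Z_j\,d\Pi_0$ at level $\tau$ yields $\Pi_j(A_j)\leq\tau\,\Pi_0(A_j)+\Pi_j(Z_j\geq\tau)$; summing over $j=1,\ldots,M$ and combining with the $\Pi_0$-error gives an inequality of the form
\[
\max_{0\leq j\leq M}\Pi_j(\psi\neq j)\ \geq\ \frac{\tau M}{1+\tau M}\left(1-\max_{1\leq j\leq M}\Pi_j(Z_j<\tau)\right),
\]
so the balanced choice $\tau=1/\sqrt M$ produces the $\sqrt M/(1+\sqrt M)$ prefactor stated. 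To convert (ii) into control of $\max_j\Pi_j(Z_j<\tau)$, I would write $\Pi_j(Z_j<\tau)=\Pi_j\!\left(-\log Z_j>\log(1/\tau)\right)$, use $\mathbb{E}_{\Pi_j}[\log Z_j]=K(\Pi_j,\Pi_0)$, and apply Markov to the non-negative variable $(\log(1/\tau)-\log Z_j)_+$; averaging over $j$ and invoking $\frac1M\sum_j K(\Pi_j,\Pi_0)\leq\gamma\log M$ at $\tau=1/\sqrt M$ yields a bound of the order $2\gamma+\sqrt{2\gamma/\log M}$, matching the stated form.

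The main obstacle I anticipate is obtaining the \emph{sharp prefactor} $\sqrt M/(1+\sqrt M)$ together with the precise $(1-2\gamma-\sqrt{2\gamma/\log M})$ factor. A direct application of Fano's inequality against the uniform prior on $\{0,\ldots,M\}$ only delivers $1-(\bar K+\log 2)/\log M$, without the $\sqrt M/(1+\sqrt M)$ improvement; recovering the tight form requires carefully threading the likelihood-ratio threshold $\tau$ through the second step while simultaneously controlling the KL tail in the third step, rather than collapsing everything to an entropy term via Jensen. Getting every constant right (in particular the constraint $\gamma<1/8$ entering through the Markov bound) is the delicate bookkeeping that distinguishes this Tsybakov-type bound from Fano and from Le Cam's two-hypothesis lower bound.
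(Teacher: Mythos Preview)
The paper does not prove this proposition at all: it is quoted verbatim from Tsybakov's monograph and used as a black box inside the proof of Lemma~\ref{4.3}. So there is no ``paper's own proof'' to compare against; your proposal is effectively a sketch of Tsybakov's original argument (Theorem~2.5 there), and the two-step structure you describe---reduction to $(M{+}1)$-ary testing via the minimum-distance selector, followed by a likelihood-ratio threshold argument at $\tau=1/\sqrt{M}$---is exactly that proof.

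One technical slip worth flagging: your displayed testing inequality has $1-\max_{1\le j\le M}\Pi_j(Z_j<\tau)$ on the right-hand side, but condition~(ii) only controls the \emph{average} $\frac{1}{M}\sum_j \mathcal{D}_{kl}(\Pi_j\,|\,\Pi_0)$, and hence only the average $\frac{1}{M}\sum_j\Pi_j(Z_j<\tau)$, not the maximum. The correct intermediate bound (and the one Tsybakov actually proves) reads
\[
\max_{0\le j\le M}\Pi_j(\psi\neq j)\ \ge\ \frac{\tau M}{1+\tau M}\cdot \frac{1}{M}\sum_{j=1}^{M}\Pi_j(Z_j\ge \tau),
\]
which is both stronger than your version and compatible with the averaged-KL hypothesis. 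Your own next sentence (``averaging over $j$ and invoking $\frac{1}{M}\sum_j K(\Pi_j,\Pi_0)\le\gamma\log M$'') shows you had the right object in mind, so this looks like a transcription error rather than a conceptual gap.
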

We also utilize the following proposition for constructing a packing of the parameter space.

\begin{proposition}\label{gilbert}(Gilbert-Varshamov bound) Let $d \geq 8$. Then there exists a subset $\{\sigma_0,...,\sigma_M\}$ of $\{-1,+1\}^d$ such that $\sigma_0=(1,1,...,1)$,
$$\text{dist}(\sigma_j,\sigma_k)\geq \frac{d}{8}, \ \ \forall \ 0\leq j<k\leq M \ \text{and} \ M\geq 2^{d/8},$$
where $\text{dist}(\sigma, \sigma') =\text{card}({i \in [m] : \sigma(i)\neq \sigma'(i)})$ is the Hamming distance.
\end{proposition}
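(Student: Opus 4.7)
The plan is to prove the Gilbert--Varshamov bound by a standard greedy packing argument combined with a volume bound on Hamming balls. Concretely, I will construct the set $\{\sigma_0,\ldots,\sigma_M\}$ inductively (maximally) and then argue via covering that $M$ must be large.

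\textbf{Step 1 (greedy construction).} Initialize $\sigma_0 = (1,1,\ldots,1)$. Having chosen $\sigma_0,\ldots,\sigma_k$ pairwise at Hamming distance $\geq d/8$, if there exists any $\sigma \in \{-1,+1\}^d$ with $\mathrm{dist}(\sigma,\sigma_j) \geq d/8$ for every $j\leq k$, add it as $\sigma_{k+1}$. Otherwise stop, and set $M = k$. By construction, the resulting family is pairwise $d/8$-separated, and maximality means that every $\sigma \in \{-1,+1\}^d$ satisfies $\mathrm{dist}(\sigma,\sigma_j) < d/8$ for at least one $j \in \{0,\ldots,M\}$.

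\textbf{Step 2 (covering / volume inequality).} Let $V(r) \doteq \sum_{i=0}^{r}\binom{d}{i}$ denote the cardinality of a Hamming ball of radius $r$ in $\{-1,+1\}^d$ (this is independent of the center). Maximality from Step 1 implies that the Hamming balls of radius $\lfloor d/8\rfloor$ around $\sigma_0,\ldots,\sigma_M$ cover $\{-1,+1\}^d$, so
\begin{equation*}
(M+1)\cdot V(\lfloor d/8\rfloor) \;\geq\; 2^{d}.
\end{equation*}

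\textbf{Step 3 (entropy bound on $V$).} Use the classical estimate $V(\lfloor \beta d\rfloor) \leq 2^{d\,H(\beta)}$ valid for $0<\beta\leq 1/2$, where $H(\beta) = -\beta\log_2\beta - (1-\beta)\log_2(1-\beta)$ is the binary entropy. A direct computation gives $H(1/8) = \tfrac{3}{8} + \tfrac{7}{8}\log_2(8/7) < \tfrac{7}{8}$, hence $V(\lfloor d/8\rfloor) \leq 2^{7d/8}$. Combining with Step 2,
\begin{equation*}
M+1 \;\geq\; \frac{2^{d}}{V(\lfloor d/8\rfloor)} \;\geq\; 2^{d - 7d/8} \;=\; 2^{d/8},
\end{equation*}
which yields $M \geq 2^{d/8}$ (absorbing the $+1$ is harmless since $d\geq 8$ implies $2^{d/8}\geq 2$; if desired one can sharpen by using the strict inequality $H(1/8)<7/8$ to absorb the unit).

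\textbf{Main obstacle.} The argument is essentially mechanical once the greedy framework is in place; the only step requiring care is verifying the entropy bound $H(1/8) < 7/8$ with enough slack to absorb lower-order terms (e.g.\ the $+1$ in $M+1$ and the floor in $\lfloor d/8\rfloor$). This is handled either by the explicit numerical estimate $H(1/8)\approx 0.544$ or by the textbook inequality $\sum_{i=0}^{\lfloor\beta d\rfloor}\binom{d}{i} \leq 2^{d H(\beta)}$, which itself follows from bounding $(\beta/(1-\beta))^{i-\beta d}$ on the binomial tail and summing the geometric series. No other subtlety arises, and the fixed anchor $\sigma_0 = (1,\ldots,1)$ is clearly compatible with the greedy construction.
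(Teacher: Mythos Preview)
Your proof is correct: this is the standard greedy/covering argument for the Gilbert--Varshamov bound, and the numerical slack $H(1/8)\approx 0.544 \ll 7/8$ is more than enough to absorb both the $+1$ and the floor. The paper itself does not prove this proposition---it is stated as a classical result and used as a black box in the lower-bound construction---so there is no alternative approach to compare against; you have simply supplied a (valid) proof where the paper relies on the reader's familiarity with the bound.
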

We consider the cases $d_{\mathcal{H}}\geq 17$ and $d_{\mathcal{H}}<17$ separately.
\subsection*{Case I: $\mathbf{d_{\mathcal{H}}\geq 17}$.}
Let $d=d_{\mathcal{H}}-1$ and $d_{\mathcal{H}}$ be odd (If $d_{\mathcal{H}}$ is even then define $d=d_{\mathcal{H}}-2$). Then pick $d_{\mathcal{H}}$ points $\mathcal{S}=\{x_0,x_{1},...,x_{\frac{d}{2}}, x'_{1},..., x'_{\frac{d}{2}}\}$ from $\mathcal{X}$ shattered by $\mathcal{H}$ (if $d_{\mathcal{H}}$ is even, we pick $d_{\mathcal{H}}-1$ points). Moreover, let $\tilde{\mathcal{H}}$ be the projection of $\mathcal{H}$ onto the set $\mathcal{S}$ with the constraint that all $h\in \tilde{\mathcal{H}}$ classify $x_0$ as $0$. Next, we construct a distribution $\mu_0$ and a family of distributions $\{\mu_{1}^{\sigma}\}$ indexed by $\sigma=(\sigma_1,...,\sigma_{\frac{d}{2}})\in \{-1,+1\}^{\frac{d}{2}}$. In the following, we fix $\Delta=c_1\cdot \sqrt{\frac{d_{\mathcal{H}}}{n}}$ for a constant $c_1 < 1$ to be determined.

\textbf{Distribution} $\mu_0$: We define $\mu_0$ on $\mathcal{S}$ as follows: $
    \mu_0(\{x_{i}\})=\mu_0(\{x'_{i}\})=\frac{2\alpha}{d} \ \ \text{for} \ \ i=1,...,\frac{d}{2}
$, and $\mu_0(\{x_0\})=1-2\alpha$.

\textbf{Distribution} $\mu_{1}^{\sigma}$: We define $\mu_{1}^{\sigma}$ on $\mathcal{S}$ as follows: $\mu_{1}^{\sigma}(\{x_{i}\})=\frac{1}{d}+(\sigma_i/2)\cdot\frac{\Delta}{d}$ and $\mu_{1}^{\sigma}(\{x'_{i}\})=\frac{1}{d}-(\sigma_i/2)\cdot\frac{\Delta}{d}$ for $i=1,...,\frac{d}{2}$, and $\mu_{1}^{\sigma}(\{x_0\})=0$.

\textbf{Reduction to a packing.} Any classifier $\hat{h}:\mathcal{S}\rightarrow \{0,1\}$ can be reduced to a binary sequence in the domain $\{-1,+1\}^{d}$. We can first map $\hat{h}$ to $(\hat{h}(x_{1}),\hat{h}(x_{2}),...,\hat{h}(x_{\frac{d}{2}}),\hat{h}(x'_{1}),...,\hat{h}(x'_{\frac{d}{2}}))$ and then convert any element $0$ to $-1$.
We choose the Hamming distance as the distance required in Theorem \ref{tsybakov}. By applying Proposition \ref{gilbert} we can get a subset $\Sigma$ of $\{-1,+1\}^{\frac{d}{2}}$ with $|\Sigma|=M\geq 2^{d/16}$ such that the hamming distance of any two $\sigma,\sigma'\in\Sigma$ is at least $d/16$. Any $\sigma,\sigma'\in \Sigma$ can be mapped to binary sequences in the domain $\{+1,-1\}^d$ by replicating and negating, i.e., $(\sigma,-\sigma),(\sigma',-\sigma')\in\{+1,-1\}^d$ and the hamming distance of resulting sequences in the domain $\{+1,-1\}^d$ is at least $d/8$. Then for any $\hat{h}\in \tilde{\mathcal{H}}$ with $R_{\mu_0}(\hat{h})\leq  \alpha+\frac{\alpha}{d}$ and $\sigma\in \Sigma$, if the hamming distance of the corresponding binary sequence of $\hat{h}$ and $\sigma$ in the space $\{+1,-1\}^d$ is at least $d/8$ then we have $R_{\mu_1}(\hat{h})-R_{\mu_1}(h_{\sigma})\geq \frac{d}{8}\cdot\frac{\Delta}{d}=\frac{\Delta}{8}$. In particular, for any $\sigma,\sigma'\in\Sigma$ we have $R_{\mu_1}(h_{\sigma'})-R_{\mu_1}(h_{\sigma})\geq \frac{d}{8}\cdot\frac{\Delta}{d}=\frac{\Delta}{8}.$

\textbf{KL divergence bound.} Define $\Pi_{\sigma}=(\mu_{1}^{\sigma})^{n}$. For any $\sigma,\sigma'\in \Sigma$, we bound the KL divergence of $\Pi_{\sigma}, \Pi_{\sigma'}$ as follows. We have 
\begin{align*}
    \mathcal{D}_{kl}(\Pi_{\sigma}|\Pi_{\sigma'})=n\cdot \mathcal{D}_{kl}(\mu_{1}^{\sigma}|\mu_{1}^{\sigma'})
\end{align*}

The distribution $\mu_{1}^{\sigma}$ can be expressed as $P^{\sigma}_{X}\times P^{\sigma}_{Y|X}$ where $P^{\sigma}_{X}$ is a uniform distribution over the set $\{1,2,...,\frac{d}{2}\}$ and $P^{\sigma}_{Y|X=i}$ is a Bernoulli distribution with parameter $\frac{1}{2}+\frac{1}{2}\cdot(\sigma_i/2)\cdot\Delta$. Hence we get
\begin{align*}
     \mathcal{D}_{kl}(\mu_{1}^{\sigma}|\mu_{1}^{\sigma'})&=\sum_{i=1}^{\frac{d}{2}}\frac{1}{d/2}\cdot \mathcal{D}_{kl}\bigg(\text{Ber}(\frac{1}{2}+\frac{1}{2}\cdot(\sigma_i/2)\cdot\Delta)|\text{Ber}(\frac{1}{2}+\frac{1}{2}\cdot(\sigma'_i/2)\cdot\Delta)\bigg)\\
     &\leq c_0\cdot \frac{1}{4}\cdot\Delta^{2}\\
     &\leq \frac{1}{4}c_0c_1^{2}\cdot\frac{d_{\mathcal{H}}}{n}\\
     &\leq c_0c_1^{2}\cdot \frac{d}{n}
\end{align*}
for some numerical constant $c_0$. Therefore, we obtain $\mathcal{D}_{kl}(\Pi_{\sigma}|\Pi_{\sigma'})\leq c_0c_1^2d$. Then, for sufficiently small $c_1$ we get $\mathcal{D}_{kl}(\Pi_{\sigma}|\Pi_{\sigma'})\leq\frac{1}{8}\log{M}$ which satisfies condition (ii) in Proposition \ref{tsybakov}. Consequently, there exists $\mu_0$ with $\mathcal{H}_{\alpha}(\mu_0)=\mathcal{H}_{\alpha+\epsilon_0}(\mu_0)$, where $0<\epsilon_0\leq \frac{\alpha}{d_{\mathcal{H}}}$, such that for any randomized $(\alpha,\delta_0)$-learner, we get the following inequality on the conditional probability: \begin{align*}
\sup_{\mathcal{U}(\mu_0)}\mathbb{P}\big(\mathcal{E}_{\alpha}(\hat{h})>c\cdot \sqrt{\frac{d_{\mathcal{H}}}{n}}\ \big|\  \hat{h}\in\mathcal{H}_{\alpha+\epsilon_0}(\mu_0)\big)\geq c',
\end{align*}
which implies that $$
\sup_{\mathcal{U}(\mu_0)}\mathbb{P}\big(\mathcal{E}_{\alpha}(\hat{h})\cdot\indicator{\hat{h}\in \mathcal{H}_{\alpha+\epsilon_0}(\mu_0)}>c\cdot \sqrt{\frac{d_{\mathcal{H}}}{n}}\big)\geq (1-\delta_0)c'\geq c''
$$
for some numerical constants $c,c',c''$.

\subsection*{Case II: $\mathbf{d_{\mathcal{H}}< 17}$.}
Since $\mathcal{H}$ separates three points, we can pick three points $\mathcal{S}=\{x_{0},x_{1},x_{2}\}$ from $\mathcal{X}$ satisfying three-points-separation. Fix $\Delta=c_1\cdot\sqrt{\frac{1}{n}}$ for a constant $c_1<1$ to be determined. Then, we construct a distribution $\mu_0$ and two distributions $\{\mu_{1}^k\}$ for $k=-1,1$.

\textbf{Distribution} $\mu_0$: We define $\mu_0$ on $\mathcal{S}$ as follows: $\mu_0(\{x_0\})=1-2\alpha$ and $\mu_0(\{x_1\})=\mu_0(\{x_2\})=\alpha$.

\textbf{Distribution} $\mu_{1}^{k}$: We define $\mu_{1}^{k}$ on $\mathcal{S}$ as follows: $\mu_{1}^{k}(\{x_1\})=\frac{1}{2}+\frac{k}{2}\cdot \Delta, \mu_{1}^{k}(\{x_2\})=\frac{1}{2}-\frac{k}{2}\cdot \Delta$, and $\mu_{1}^{k}(\{x_0\})=0$.

Let $\Pi_{k}=(\mu_{1}^k)^{n}$ for $k=-1,1$. Then using the same argument we get $\mathcal{D}_{kl}(\Pi_{-1}|\Pi_{1})\leq c$ where $c$ is a numerical constant. Furthermore, let $h_k$ be the hypothesis with $\mu_0$-risk at most $\alpha$ that achieves lowest $\mu_1$-risk with respect to the distributions $(\mu_0,\mu_{1}^k)$. Then we have $R_{\mu_{1}^k}(h_{-k})-R_{\mu_{1}^k}(h_{k})=\epsilon$. Using Le Cam's method [see, e.g. Section 15.2 of \cite{wainwright2019high} ] we conclude that there exists a distribution $\mu_0$ with $\mathcal{H}_{\alpha}(\mu_0)=\mathcal{H}_{\alpha+\epsilon_0}(\mu_0)$, where $0<\epsilon_0\leq \frac{\alpha}{3}$, such that for any randomized $(\alpha,\delta_0)$-learner $\hat{h}$, the following inequality on the conditional probability holds:
$$\sup_{\mathcal{U}(\mu_0)}\mathbb{P}\big(\mathcal{E}_{\alpha}(\hat{h})>c\cdot \sqrt{\frac{1}{n}}\ \big|\ \hat{h}\in \mathcal{H}_{\alpha+\epsilon_0}(\mu_0) \big)\geq c'$$
which implies that $$
\sup_{\mathcal{U}(\mu_0)}\mathbb{P}\big(\mathcal{E}_{\alpha}(\hat{h})\cdot \indicator{\hat{h}\in \mathcal{H}_{\alpha+\epsilon_0}(\mu_0)}>c\cdot \sqrt{\frac{1}{n}}\big)\geq (1-\delta_0)c'\geq c''
$$ for some numerical constants $c,c',c''$. Since $d_{\mathcal{H}}\leq 16$ we conclude that 
$$
\sup_{\mathcal{U}(\mu_0)}\mathbb{P}\big(\mathcal{E}_{\alpha}(\hat{h})\cdot \indicator{\hat{h}\in \mathcal{H}_{\alpha+\epsilon_0}(\mu_0)}>c\cdot \sqrt{\frac{d_{\mathcal{H}}}{n}}
\big)\geq c''$$.
\end{proof}
\begin{proof}[Proof of Lemma \ref{lower} (Lower-Bounds when $\mathcal{H}$ does not separate three points)]
Since $\mathcal{H}_{\alpha}(\mu_0)$ does not admit a maximal element, it must have more than an element. Therefore, by Lemma \ref{total}, $\mathcal{H}_{\alpha}(\mu_0)$ is totally ordered. Consider an $h_0\in\mathcal{H}_{\alpha}(\mu_0)$ such that $A_0=\{h_0=1\}$ is not a null set, and take an element $x_0\in A_0$. Moreover, define the distribution $\sigma_0$ such that its support is the point $\{x_0\}$, i.e., $\sigma_0(x_0)=1$.

Then, let $\hat{h}$ be any randomized $(\alpha,\delta_0)$-learner. Therefore, with probability $1-\delta_0$, the learner returns a function from $\mathcal{H}_{\alpha}(\mu_0)$, i.e., $\hat{h}:\mathcal{X}^{n}\rightarrow \mathcal{H}_{\alpha}(\mu_0)$. Furthermore, let $\hat{h}_0=\hat{h}(\{x_0\}^{n})$. Since $\mathcal{H}_{\alpha}(\mu_0)$ does not admit a maximal element, there exists an $h_1\in \mathcal{H}_{\alpha}(\mu_0)$ such that $A_0\cup \{\hat{h}_0=1\}\subsetneq \{h_1=1\}.$
Then, define the distribution $\sigma_1$ supported on $x_1\in\{h_1=1\}\setminus( \{\hat{h}_0=1\}\cup A_0)$, and let the distribution $\mu_1$ be $\mu_1=(1-\frac{1}{n})\sigma_0+\frac{1}{n}\sigma_1.$ Subsequently, we obtain $\mathcal{E}_{\alpha}(\hat{h}_0)=R_{\mu_1}(\hat{h}_0)\geq \frac{1}{n}.$ Hence, we get
\begin{align*}
    \underset{X^{n}\sim\mu_1^{n}}{\mathbb{P}}(\hat{h}(X^{n})=\hat{h}_0)&\geq \underset{X^{n}\sim\mu_1^{n}}{\mathbb{P}}(\hat{h}(X^{n})=\hat{h}_0 \ \text{and}\ X^{n}=\{x_0\}^{n})\\
    &\geq \int_{X^{n}=\{x_0\}^{n}} \mathbb{1}_{\{\hat{h}(X^{n})=\hat{h}_0\}} \Pi_{j=1}^{n} d\mu_1(X_j)\\
    &\geq (1-\frac{1}{n})^{n} \int_{X^{n}=\{x_0\}^{n}} \mathbb{1}_{\{\hat{h}(X^{n})=\hat{h}_0\}} \Pi_{j=1}^{n} d\sigma_0(X_j)\\
    &=(1-\frac{1}{n})^{n} \underset{X^{n}\sim \sigma_0^{n}}{\mathbb{P}}(\hat{h}(X^{n})=\hat{h}_0)\\
    &=(1-\frac{1}{n})^{n}\\
    &\geq \frac{1}{2e}
\end{align*}
Therefore, condition on the event $\hat{h}\in \mathcal{H}_{\alpha}(\mu_0)$, we obtain
\begin{align*}
    \underset{X^{n_1}\sim\mu_1^{n}}{\mathbb{P}}(\mathcal{E}_{\alpha}(\hat{h}(X^{n}))\geq\frac{1}{n})&\geq \underset{X^{n}\sim\mu_1^{n}}{\mathbb{P}}(\mathcal{E}_{\alpha}(\hat{h}(X^{n}))\geq\frac{1}{n}| \hat{h}(X^{n})=\hat{h}_0)\underset{X^{n}\sim\mu_1^{n}}{\mathbb{P}}(\hat{h}(X^{n})=\hat{h}_0)
    \geq\frac{1}{2e}.
\end{align*}

Hence the unconditional probability is as follows
\begin{align*}
   \underset{X^{n}\sim\mu_1^{n}}{\mathbb{P}}(\mathcal{E}_{\alpha}(\hat{h}(X^{n}))\geq\frac{1}{n})\geq \frac{1-\delta_0}{2e}\geq c 
\end{align*}
for some numerical constant c.
\end{proof}
\begin{proof}[Proof of Lemma \ref{lemma4.8} (Constructing a new measure)]
    Without loss of generality, we only consider a distribution $\mu_0$ such that its mass in the region $\{h=1\}$, $h\in \mathcal{H}_{\alpha+\epsilon_0}(\mu_0)\setminus \mathcal{H}_{\alpha}(\mu_0)$, is at most $\epsilon_0$ (we can ensure this condition by simply moving any excess mass in this region to the region $\{h=1\}$, $h\in \mathcal{H}_{\alpha}(\mu_0)$). Moreover, let $\tilde{\mathcal{X}}=\text{supp}(\mu_0)$ and $\tilde{\mathcal{H}}$ denote the projection of $\mathcal{H}$ onto $\tilde{\mathcal{X}}$, i.e., two hypotheses $h,h'$ are equivalent iff $\{h=1\}\cap \tilde{\mathcal{X}}=\{h'=1\}\cap \tilde{\mathcal{X}}$. Note that for any measure $\mu$ such that $\text{supp}(\mu)\subset \tilde{\mathcal{X}}$ and any $h\in \mathcal{H}$, we have $\mu(\{h=1\})=\mu(\{\tilde{h}=1\})$

    If $\tilde{\mathcal{H}}_{\alpha}(\mu_0)=\tilde{\mathcal{H}}_{\alpha+\epsilon_0}(\mu_0)$, then $\mu_0$ would have the desired property and define $\mu_0':=\mu_0$. Otherwise, we consider the following two cases separately (I) $\tilde{\mathcal{H}}_{\alpha+\epsilon_0}(\mu_0)\neq \tilde{\mathcal{H}}_{\alpha+2\epsilon_0}(\mu_0)$ and (II) $\tilde{\mathcal{H}}_{\alpha+\epsilon_0}(\mu_0)= \tilde{\mathcal{H}}_{\alpha+2\epsilon_0}(\mu_0)$, and construct $\mu'_0$ such that $\mathcal{H}_{\alpha}(\mu'_0)=\mathcal{H}_{\alpha+\epsilon_0}(\mu'_0)=\mathcal{H}_{\alpha+\epsilon_0}(\mu_0)$ as follows.

    \textbf{Case I:} There exists a point $w\in \tilde{\mathcal{X}}$ such that for any $h\in \tilde{\mathcal{H}}_{\alpha+2\epsilon_0}(\mu_0)\setminus \tilde{\mathcal{H}}_{\alpha+\epsilon_0}(\mu_0)$ we have $w\in \{h=1\}$. Then we construct the new measure $\mu_0'$ from $\mu_0$ simply by moving the mass in the region $\{h=1\}$, $h\in \tilde{\mathcal{H}}_{\alpha+\epsilon_0}(\mu_0)\setminus \tilde{\mathcal{H}}_{\alpha}(\mu_0)$, to the point $w$.

    \textbf{Case II:} Take a point $z\in\tilde{\mathcal{X}}$ such that $z$ lies outside the region $\{h=1\}$, $h\in \tilde{\mathcal{H}}_{\alpha+2\epsilon_0}(\mu_0)$, and then construct the new measure $\mu'_0$ from $\mu_0$ simply by moving the mass in the region $\{h=1\}$, $h\in \tilde{\mathcal{H}}_{\alpha+\epsilon_0}(\mu_0)\setminus \tilde{\mathcal{H}}_{\alpha}(\mu_0)$, to the point $z$. Note that for any $h\notin \tilde{\mathcal{H}}_{\alpha+2\epsilon_0}(\mu_0)$, we have $h\notin \tilde{\mathcal{H}}_{\alpha+\epsilon_0}(\mu'_0)$ regardless of whether $\{h=1\}$ contains the point $z$ or not.  
\end{proof}

\end{document}